\newtheorem{theorem}{Theorem}
\def\adl@drawiv#1#2#3{%
        \hskip.5\tabcolsep
        \xleaders#3{#2.5\@tempdimb #1{1}#2.5\@tempdimb}%
                #2\z@ plus1fil minus1fil\relax
        \hskip.5\tabcolsep}
\newcommand{\cdashlinelr}[1]{%
  \noalign{\vskip\aboverulesep
           \global\let\@dashdrawstore\adl@draw
           \global\let\adl@draw\adl@drawiv}
  \cdashline{#1}
  \noalign{\global\let\adl@draw\@dashdrawstore
           \vskip\belowrulesep}}
\newcommand{\ie}{\textit{i.e.,}\@\xspace}
\newcommand{\eg}{\textit{e.g.,}\@\xspace}
\newcommand{\etal}{\textit{et al.}\@\xspace}
\DeclareMathOperator*{\argmax}{arg\,max}
\newcommand{\promptA}{\textsf{PromptDPSGD}\@\xspace}
\newcommand{\promptB}{\textsf{PromptPATE}\@\xspace}
\newcommand{\prompt}{P}
\newcommand{\model}{L_{\prompt}}
\newcommand{\sst}{sst2\xspace} 
\newcommand{\mnli}{mnli\xspace} 
\newcommand{\qqp}{qqp\xspace} 
\newcommand{\qnli}{qnli\xspace} 
\newcommand{\dbpedia}{dbpedia\xspace} 
\newcommand{\trec}{trec\xspace} 
\newcommand{\imdb}{imdb\xspace} 
\newcommand{\agnews}{agnews\xspace}
\title{Flocks of Stochastic Parrots: Differentially Private Prompt Learning for Large Language Models}
\author{%
  Haonan Duan\thanks{Corresponding and leading author: haonand@cs.toronto.edu}\ \ \thanks{Equal contribution.}\ , Adam Dziedzic$^{\dagger}$, Nicolas Papernot, Franziska Boenisch\\
  University of Toronto and Vector Institute \\ 
}
\begin{document}

\maketitle

\begin{abstract}
Large language models (LLMs) are excellent in-context learners. However, the sensitivity of data contained in prompts raises privacy concerns. Our work first shows that these concerns are valid: we instantiate a simple but highly effective membership inference attack against the data used to prompt LLMs.
To address this vulnerability, one could forego prompting and resort to fine-tuning LLMs with known algorithms for private gradient descent. 
However, this comes at the expense of the practicality and 
efficiency offered by prompting.
Therefore, we propose to privately learn to prompt.
We first show that \textit{soft} prompts can be obtained privately through gradient descent on 
downstream data. However, this is not the case for \textit{discrete} prompts.
Thus, we orchestrate a noisy vote among \textit{an ensemble of LLMs} presented with different prompts, i.e., \textit{a flock of stochastic parrots}.  
The vote privately transfers the flock's knowledge into a single public prompt. We show that LLMs prompted with our private algorithms closely match the non-private baselines. For example, using GPT3 as the base model, we achieve a downstream accuracy of $92.7\%$ on the \sst dataset with $(\varepsilon=0.147, \delta=10^{-6})$-differential privacy vs. $95.2\%$ for the non-private baseline. Through our experiments, we also show that our prompt-based approach is easily deployed with existing commercial~APIs. %

\end{abstract}

\section{Introduction}

Large language models (LLMs) exhibit strong capabilities for in-context learning~\citep{brown2020language,radford2019language}.
By prepending the adequate prompt to an LLM's input, the model can perform a myriad of natural language downstream tasks without any modifications to its parameters~\citep{raffel2020exploring}. %
While the data used to train an LLM is usually assumed to be public, downstream data used in the prompt is often more sensitive. This can elicit confidentiality issues, for instance, if prompts contain information that represents valuable intellectual property~\cite{samsungleakage}. %
At the same time, it also raises privacy concerns when the data involves personal information about individuals.

In this paper, our first contribution is to show that these concerns are valid.
We are the first to instantiate a highly effective membership inference attack (MIA)~\cite{carlini2022membership, shokri2017membership} against prompts. 
Our attack is able to determine if a given data point was used within the prompt of the LLM.
The only existing solution to mitigate this privacy risk would be to forego prompting and instead fine-tune the LLM with a privacy-preserving training algorithm~\cite{li2022large,yu2022differentially}.
Yet, fine-tuning lacks the efficiency and practicality of prompting. Indeed, fine-tuning requires significantly more data~\citep{scao2021many}, computational resources~\cite{li2022large}, and storage space~\citep{li2021prefix}. %
Additionally, fine-tuning requires access to the LLM parameters.
However, many of the state-of-the-art LLMs are proprietary models deployed behind an API which only allows its users to query the LLMs~\cite{antropicclaude,brown2020language,googlelamda,googlebard,openai2023gpt4}.

To leverage the benefits of prompting while at the same time protecting the data contained in prompts, we propose the first algorithms for  prompt learning with privacy. Our algorithms offer rigorous guarantees expressed using differential privacy~\cite{dwork2006differential}. %
Perhaps closest to existing work on fine-tuning,  %
we propose to leverage the canonical DPSGD algorithm~\cite{abadi2016deep} to learn soft prompts with differential privacy guarantees. Our \promptA algorithm performs a private gradient descent on the soft prompt embeddings prepended to the LLM's private input. 
Since these embeddings have very few parameters in comparison to LLMs, our \promptA is efficient and yields competitive privacy utility trade-offs at a fraction of the training complexity of private fine-tuning. 

\begin{figure}[t]
        \begin{subfigure}[b]{0.24\textwidth}
            \centering
            \includegraphics[width=\textwidth]{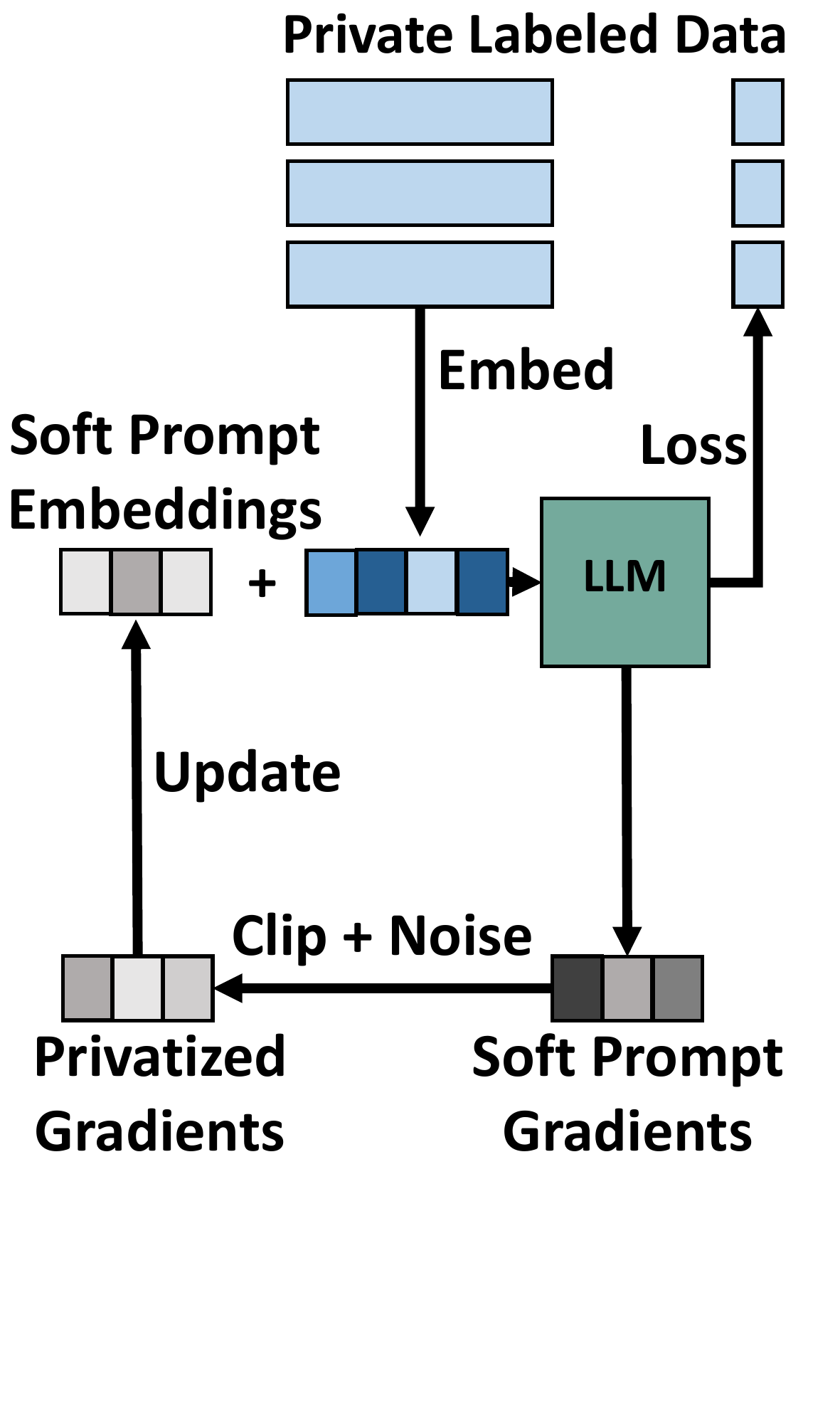}
            \vspace{-1.2cm}
        \caption{\promptA}
        \label{fig:soft_prompt}
        \end{subfigure}
        \hfill
        \begin{subfigure}[b]{0.7\textwidth}  
            \centering 
            \includegraphics[width=\textwidth]{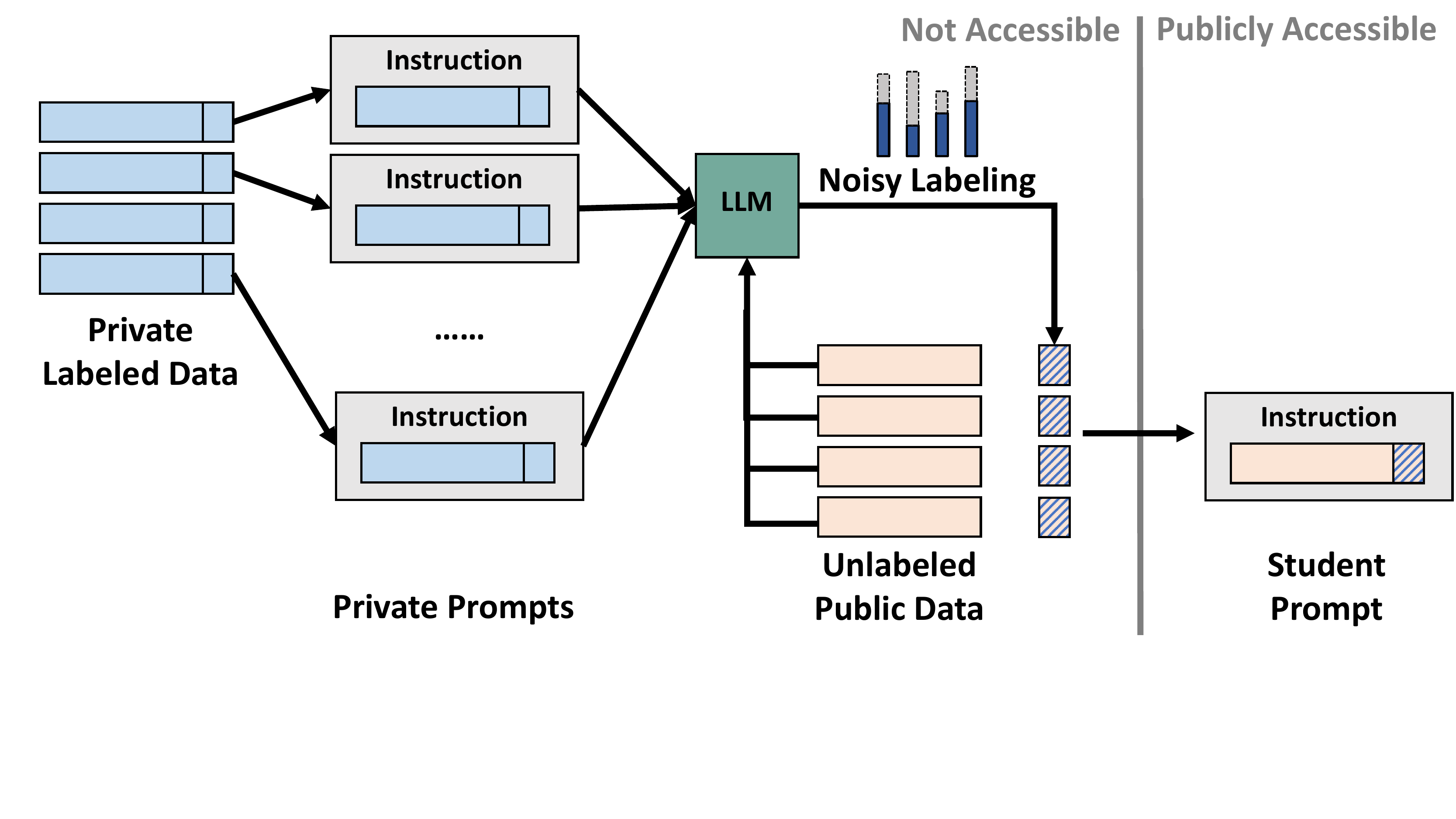}
            \vspace{-1.2cm}
        \caption{\promptB}
        \label{fig:discrete_prompt}
        \end{subfigure}
        \hfill
        \caption{\textbf{Our methods for private prompt learning.} \textit{Left:} \promptA obtains the input gradients from the LLM, and performs DPSGD to update the soft prompt embedding while keeping the LLM frozen. \textit{Right:} \promptB creates a noisy ensemble of private discrete prompts, and then transfers knowledge by selecting a student prompt that can be publicly released. \promptB only needs black-box access of the LLM and, thus, can be easily deployed with commercial APIs.  
        } 
        \label{fig:pate_ablation}
\end{figure}
\setlength{\textfloatsep}{10pt}

However, learning soft prompts with DPSGD may not always be possible because it requires computing gradients with respect to the prompt input.
As mentioned previously, current APIs~\cite{antropicclaude,brown2020language,googlelamda,googlebard,openai2023gpt4} usually do not provide these gradients. %
We thus turn to \textit{discrete} prompts which consist of natural language tokens.
Discrete prompts address the aforementioned limitations while being more data-efficient. %
Our insight is to observe that LLMs with discrete prompts naturally lend themselves to another canonical approach of differentially private learning known as 
the private aggregation of teacher ensembles (PATE)~\cite{papernot2018scalable}.
We introduce \promptB, which creates an ensemble of LLMs with different discrete prompts from the private dataset which we refer to as \textit{a flock of stochastic parrots}~\cite{bender2021parrots}.
Since interacting with the flock directly can leak private information about the prompts, as we demonstrate with our MIA, %
\promptB additionally performs a knowledge transfer.
Therefore, each model in the flock generates a next token prediction for a short input sequence of some public data.
By performing a noisy majority vote over all models' token output, we generate a single output that, due to the noise addition, implements differential privacy guarantees while incorporating knowledge from the flock.
The public input together with the noisy aggregated output form a new single example for the discrete \textit{student prompt} that
can be prepended to the LLM in lieu of the individual prompts which contain private information. 
In addition to providing rigorous privacy guarantees, our \promptB is highly efficient, since, instead of having to query every model from the flock at inference time, it suffices to query the LLM prepended with the student prompt \textit{once}.

We perform extensive experiments against multiple popular LLMs, such as GPT3~\cite{brown2020language} and Claude~\cite{antropicclaude}, that are deployed behind commercial black-box APIs.
Our results highlight that \promptB provides high downstream performance that matches the one of non-private prompting even at very strong privacy guarantees.
On the \sst dataset with GPT3, for instance, we reach an accuracy of  $92.7\%$ with privacy costs as little as $(\varepsilon=0.147, \delta=10^{-6})$-differential privacy, even when the public data used during \promptB's knowledge transfer stem from a different distribution than \sst.
Our results closely matches the non-private baseline accuracy ($95.2\%$). %
Thus, we conclude that prompt learning for LLMs is not only more efficient and practical than fine-tuning but can also achieve high utility even with strong and practical privacy protection in place.

In summary, we make the following contributions:
\begin{itemize}
    \item We instantiate the first MIA on prompted LLMs and show that we can effectively infer membership of the prompted data points with high success.
    \item We propose a lightweight alternative to DP fine-tuning, namely \promptA, which optimizes orders of magnitude fewer parameters while keeping the original LLM frozen. 
    \item We propose \promptB, the first method for DP learning with LLMs that requires only black-box access to the model---making it easily deployable for commercial LLM APIs. 
    \item Our experiments on multiple state-of-the-art commercial APIs~\cite{brown2020language,antropicclaude} highlight that our methods achieve both high utility and strong privacy protections in various setups.
    
\end{itemize}

\section{Background and Related Work}
\label{sec:background}

\paragraph{Prompts for LLMs.}
\label{sub:prompts}

The success of LLMs, such as BERT, Claude, OPT, or different versions of GPT  %
and their exceptional in-context learning capacities gave rise to prompt-based learning~\citep{devlin2018bert,brown2020language,radford2018improving,radford2019language, openai2023gpt4, zhang2022opt}.
Prompts serve as \textit{demonstrations} of the downstream task, which the model can then generalize from.
There are two paradigms for LLM prompting, namely \textit{discrete} and \textit{soft} prompts.

Discrete prompts~\cite{brown2020language,gao2020making,guo2022efficient,liu2021makes,shin2020autoprompt} are natural-language instructions that contain examples from the downstream task in a well-crafted template.
Tuning discrete prompts is often done by prompting the model with different combination of examples, assessing their performance on the downstream task, and choosing the combination that yields the highest performance as the final prompt.

In contrast to discrete prompts, soft prompts~\citep{lester2021power,liu2021makes} prepend trainable continuous embeddings to the inputs of LLMs. 
These embeddings are initialized either at random or with embedding vectors that correspond to tokens from the dictionary.
During tuning, the embeddings are updated through gradient descent to minimize the loss of the prompted model on the private downstream task.
To increase performance further, trainable embeddings can be prepended not only to the input but also to every LLM layer, a technique known as \textit{prefix}~\cite{li2021prefix,liu2022p,liu2021p2}.

Both soft prompts and prefix train end-to-end without any human involvement through backpropagation over the LLM. 
On the other hand, discrete prompts have to be designed manually through careful prompt engineering.
Yet, prompt engineering only needs inference passes over the LLM which makes discrete prompt more computationally lightweight.
Our work provides privacy protection for all of these paradigms: discrete prompts, as well as for soft prompts and prefix.

\paragraph{Privacy Leakage in LLMs.}
\label{sub:privacy_leakage}
LLMs have been shown to memorize data both from their original large training corpora~\citep{carlini2022quantifying,ippolito2022preventing,kharitonov2021bpe,mccoy2021much,tirumala2022memorization,zhang2021counterfactual} and from smaller private datasets used to fine-tune them for downstream tasks~\cite{mireshghallah2022memorization}.
The only prior work around privacy leakage in prompt-based learning utilizes prompts to extract knowledge from trained LLMs~\citep{davison2019commonsense,jiang2020can,petroni2019language}.
In contrast, we study the privacy of the prompting data itself. 
To do so, we investigate the canonical privacy attack known as \textbf{membership inference attacks (MIA)}~\citep{carlini2022membership,shokri2017membership}. Its use as a practical means to demonstrate leakage of private information in ML was recently popularized by a line of work on quantifying memorization~\citep{carlini2021extracting,shejwalkar2021membership,song2019auditing}.
While prior work utilizes MIAs to assess whether a given data point was used to train an LLM, we instantiate a MIA to assess whether a given data point was used within the prompt prepended to the inputs of a trained LLM.

\paragraph{Defending Against Privacy Leakage in LLMs.}
\label{sub:defenses}
Prior work either focuses on training~\cite{anil2021large, hoory2021learning} or fine-tuning~\cite{li2022large,yu2022differentially} LLMs with privacy guarantees.
These approaches rely on the mathematical framework of \textbf{differential privacy (DP)}~\cite{dwork2006differential} and in particular the \textbf{DPSGD} algorithm  for private stochastic gradient descent~\citep{abadi2016deep}. 
Here, DPSGD is applied to guarantee that one outputs approximately the same model parameters whether or not any given data point was used to train or fine-tune the model. 
To achieve this, DPSGD clips the per-example gradients that are computed during training and adds well-calibrated noise to each model update. %
These two operations typically increase the computational complexity of training and decrease the utility of the resulting model~\cite{abadi2016deep,bassily2014private,tramer2021differentially}.  
To counteract these effects, state-of-the-art methods for full DP-fine tuning in LLMs require extensive hyperparameter tuning and vast computational resources~\cite{li2022large}. 
Alternative approaches refrain from updating the large number of model parameters and instead introduce additional layers into the model architecture and only fine-tune these layers with DPSGD~\cite{yu2022differentially}.
To the best of our knowledge, no prior work attempted to provide DP guarantees for prompt data in LLMs.

\paragraph{Setup and Notation.}
We denote by $\prompt$ the soft or discrete prompt that is prepended to any input sequence $x_i$ when querying the language model $L$.
For brevity, we denote $L([\prompt,x_i])$ by $\model(x_i)$.\footnote{In the prefix method, $\model$ denotes prepending trainable parameters to every layer's input, and not only to the model input $x_i$.}
The output $y_i$ of $\model(x_i)$ is an $M$-dimensional probability vector, with $M$ being the size of the model's vocabulary.
Each component of $y_i$ corresponds to the probability that the $\model$ assigns to the respective token for being the next token in the sequence $x_i$.
The semantic meaning of the next token varies depending on the given downstream task.
For instance, for classification, the index with the highest probability indicates the token of the class that $\model$ assigns to $x_i$.

\section{Private Information about Prompt Data Leaks from Prompted LLMs}
\label{sec:MIA}

By instantiating a MIA against prompted LLMs, we want to highlight that the private data used within a prompt (which we refer to as \textit{prompt data} from hereon) can be subject to a substantial privacy risk. 
We showcase this risk at the example of LLMs that are prompted with discrete prompts $\prompt$ containing tuples of demonstrations from classification downstream tasks as prompt data $p=\{(p_x,p_y)\}$.
For example, in a prompt with one demonstration (\textit{one-shot learning}), the prompt data $p$ may be specified as $p=$ \{(\textit{"The movie was great.", "positive"})\}.
Our prompts are provided in a consistent template where one or multiple demonstrations  are combined with instructions as $\prompt=$ \textit{[Instruction, (text sequence $p_x$, class-label token $p_y$), $\dots$]}.

For our MIA, we consider an adversary who aims at inferring whether a given private demonstration $(p_x, p_y)$ was used within the prompt data $p$.
The adversary holds $n$ candidate demonstrations of text sequences and corresponding labels $l_i$ and queries the text sequences $(x_1, \cdots, x_n)$ to $\model$ with black-box access. 
The prompted model $\model$ then returns the output probability vectors $(y_1, \cdots, y_n)$.
Following prior work~\citep{jayaraman2021revisiting,yeom2018privacy}, we analyze the model's output probability at token $y_{i,{l_i}}$ that corresponds to the \textit{correct} target class label of every $x_i$.
The intuition to distinguish between members and non-members is that the output probabilities at the correct class $l_i$ will be significantly higher for demonstrations that were used within the prompt, \ie members with $(p_x, p_y) = (x_i,l_i)$.
We show that even with this simple MIA, we can reliably determine membership for the prompt data.

\begin{figure}[t]
        \begin{subfigure}[b]{0.43\textwidth}
            \centering
            \includegraphics[width=0.8\textwidth]{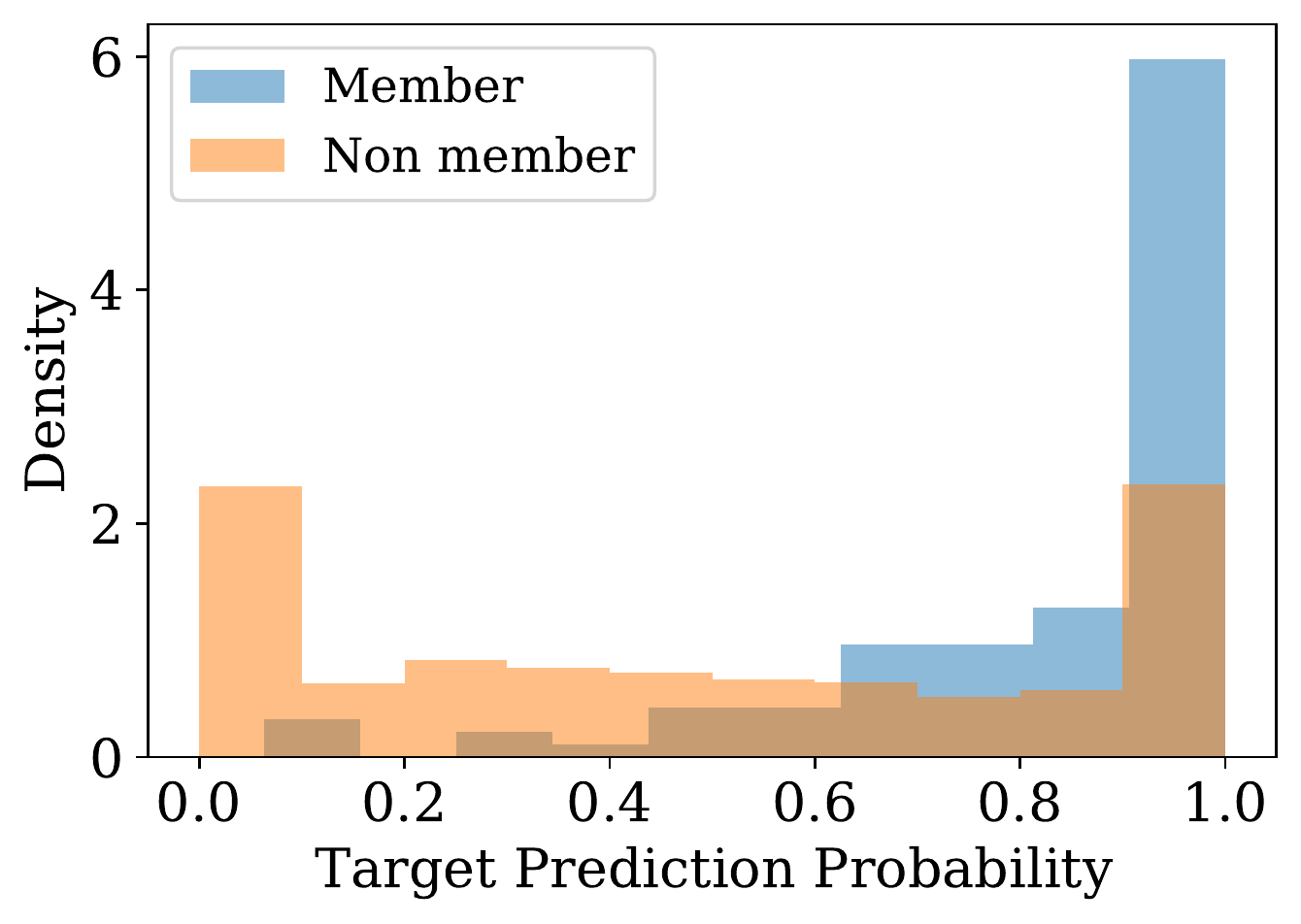}
        \caption{Probability Distribution.}
        \label{fig:histogram_dbpedia}
        \end{subfigure}
        \hfill
        \begin{subfigure}[b]{0.43\textwidth}  
            \centering 
            \includegraphics[width=0.8\textwidth]{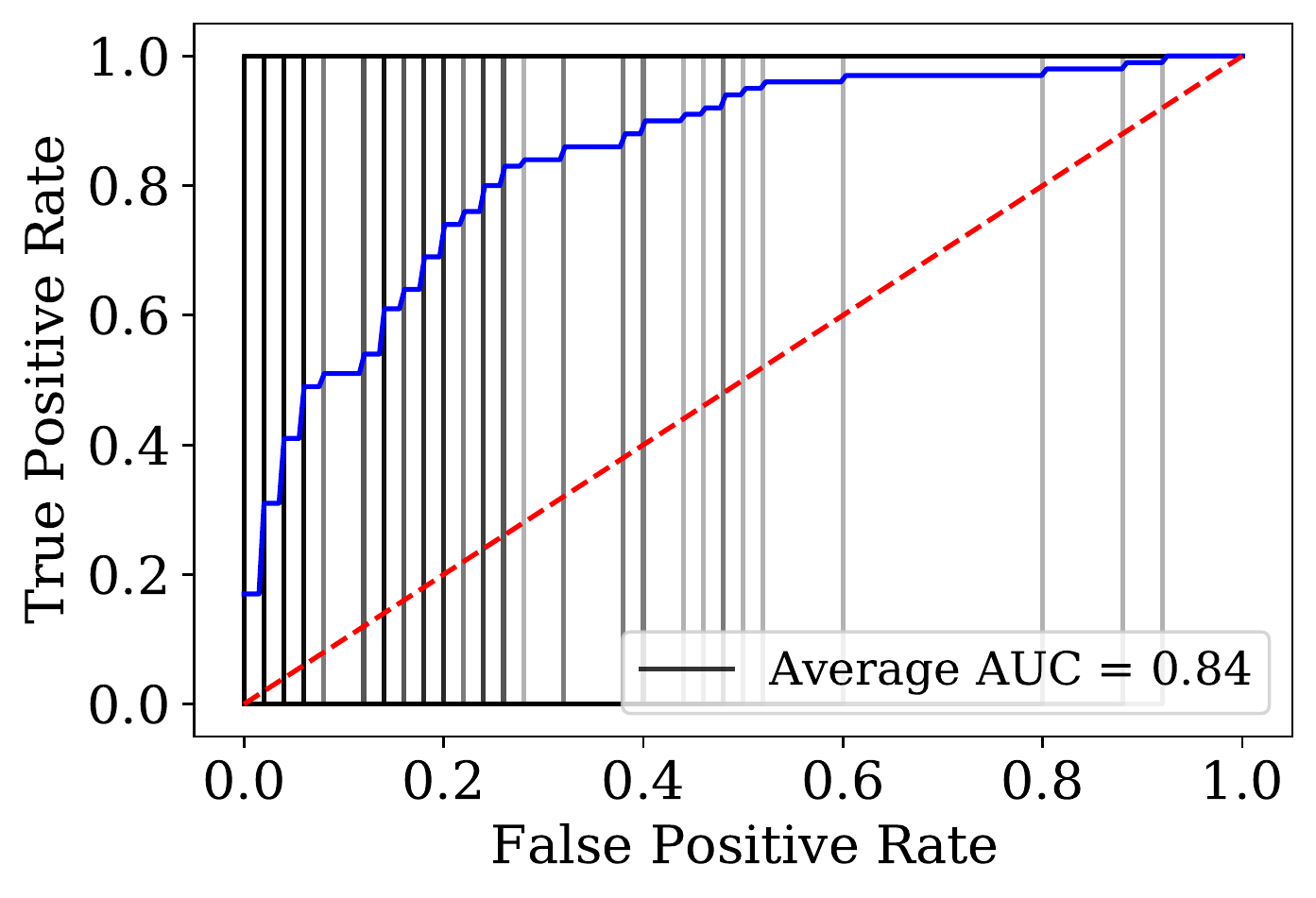}
        \caption{AUC-ROC Curve.}
        \label{fig:auc_roc_dbpedia}
        \end{subfigure}
        \hfill
        \caption{\textbf{MIA Risk.} We study GPT3 prompted with $100$ different one-shot examples (\dbpedia).
        \textit{left}: We present the prediction probabilities at the correct class for members (the one-shot example) and non-members ($50$ randomly sampled private points).        
        The output probability for members is significantly higher than for non-member data points. 
        \textit{right}: We present the AUC-ROC curves of our MIA against the $100$ prompts (gray lines) and the blue line as an average over all attacks. Given that each prompt has only one member, the resulting TPRs can only be 0\% or 100\% which leads to the step-shape of the gray curves. The result indicates that our attack is significantly more successful than random guessing (the red dashed line). 
        } 
        \label{fig:pate_ablation}
\end{figure}

\paragraph{Experimental Setup.} We prompt GPT3-Babbage~\citep{brown2020language} with multiple \textit{one-shot examples} to solve four standard downstream text classification tasks, namely \textit{\dbpedia}~\citep{zhang2015character}, \textit{\sst}~\citep{socher2013recursive}, \textit{\agnews}~\citep{zhang2015character} and \textit{\trec}~\citep{voorhees2000building}. 
The template of our prompts follows \citep{zhao2021calibrate}. 
To evaluate our MIAs, we consider the single data point used within the prompt as a members and $50$ other randomly selected data points from the respective task's training dataset as non-members. %
This skewed distribution between members and non-members (1 vs 50) corresponds to a realistic scenario where only a small proportion of the candidate data targeted by the adversary are members~\cite{jayaraman2021revisiting}. 
To quantify the success of our attack, we report the AUC-ROC curves of $100$ random trials.

\paragraph{Results.} Before evaluating the success of the MIA, we analyze the probability output from GPT3 for the correct target class between member and non-member data points.
\Cref{fig:histogram_dbpedia} shows for the \dbpedia dataset that the prediction probabilities for non-members are significantly lower than for members. %
\Cref{fig:auc_roc_dbpedia} shows that this leads to a high MIA risk in terms of an average AUC score of $0.84$ for the prompt data.
Similar results for other datasets and models are presented in \Cref{appendix}. %
These results highlight that private information can leak from prompt data easily and thus motivate the urgent need for defenses which we develop in the rest of this paper.

\section{Methods for Privacy Preserving Prompts}
\label{sec:method}

As of now, if we want to protect the private downstream data, we have to forego prompting altogether because, to the best of our knowledge, no algorithms for private prompt learning exist.
The only alternative to privately adapt the LLM would be to perform DP fine-tuning~\cite{li2022large,yu2022differentially}.
However, this approach is only feasible when we have direct access to the LLM to update its parameters with DPSGD~\cite{li2022large} or to even change the model architecture to insert additional parameters---fine-tuned with DPSGD~\cite{yu2022differentially}. This is prohibitively expensive and mostly impossible with the commercial API, thus we propose the first algorithms that enable differentially private prompt learning.

We consider two main paradigms of prompting: soft prompts and discrete prompts.
To learn private soft prompts, we introduce \promptA.
\promptA is a parameter-efficient alternative to DP fine-tuning that does not need modifying the parameters or architectures of the LLM. %
However, many popular APIs ~\cite{antropicclaude,brown2020language,googlelamda,googlebard,openai2023gpt4} do not support soft prompts yet as it requires gradients with respect to the input.
Therefore, we propose \promptB for discrete prompts.
\promptB requires only black-box access to an LLM without any knowledge of the LLM's architecture or mode of operation.
Instead, the algorithm only needs the next-token prediction of the LLM. This, to our knowledge represents the first solution for privately adapting LLMs in restricted API setups.

\subsection{\promptA: DPSGD for Private Soft Prompt Learning}
\label{sub:dpsgd_prompt} 
In general, all discrete input tokens to LLMs are internally transformed into continuous input embeddings that the LLM then operates on.
Soft prompts are just additional continuous input embeddings that can be prepended to the original input embeddings before passing them through the LLM. %
To train (or \textit{tune}) soft prompts, we require training data from a potentially private downstream task.
After prepending the continuous soft prompt embeddings to input examples from the training data, we can calculate the gradients for the loss of the prompted LLM with respect to these soft prompt embeddings.
The gradients provide information about how the soft prompt should be updated in order to minimize the loss on the training data.

If we can obtain the gradients for soft prompts, we can learn these prompts with privacy guarantees by applying the canonical DPSGD algorithm~\cite{abadi2016deep}. 
The same applies to prefix, therefore, when we talk about soft prompts in the following, we implicitly also include prefix.
We call this approach \promptA. The algorithm yields soft prompts with DP guarantees that can be deployed with the LLM to solve the respective downstream task. 
The privacy analysis of \promptA follows the one of the standard DPSGD. Note, however, that while conceptually similar to fine-tuning the LLM's parameters with DPSGD~\cite{yu2022differentially, li2022large}, \promptA differs in a crucial aspect. 
In DP-SGD fine-tuning, we require the gradients with respect to all or a subset of the model parameters and update these parameters to minimize the loss.
In contrast, in \promptA, we use the gradients with respect to the soft prompt embeddings and only alter these. 
We highlight this difference in our \promptA-algorithm that we present in \Cref{app:method}.

While this difference seems subtle, it has far-reaching consequences. First, there are orders of magnitude fewer parameters that need to be updated which increases training efficiency.
Second, and most importantly, it allows us to keep operating on the original LLM. We discuss the resulting advantages, such as storage efficiency, and the ability to process multiple different tasks simultaneously at the end of this section (in~\ref{sub:prompts-vs-tuning}).
These advantages make \promptA conceptually superior to private fine-tuning.
At the same time, as we show in our evaluation, despite the small number of trainable parameters, \promptA, for simpler tasks, matches the performance of private fine-tuning. 
Yet, current APIs~\cite{antropicclaude,brown2020language,googlelamda,googlebard,openai2023gpt4} do not support soft prompting, prefix, or private fine-tuning and only provide black-box access through discrete prompts. For these setups, we propose \promptB.

\subsection{\promptB: PATE for Privacy Preserving Discrete Prompts}
\label{sub:pate_prompt}
PATE~\cite{papernot2017semi, papernot2018scalable} enables learning classifiers with DP guarantees.
It first trains an ensemble of \textit{teacher} models on disjoint subsets of the private data.
Second, through a noisy labeling process, the ensemble privately transfers its knowledge to an unlabeled public dataset. Finally, a separate \textit{student} model is trained on this labeled public dataset for release.
The noisy knowledge transfer in the second step relies on the Confident GNMAX algorithm~\cite{papernot2018scalable} that we detail in \Cref{app:method}.
It consists of three main parts: for any input data point from the public unlabeled dataset, each teacher votes for the most likely class.
Then, the consensus over the teachers' votes is determined and queries with low consensus are rejected to avoid revealing too much information about the private decision boundary.
Finally, the returned class label for any non-rejected data point is determined as a noisy argmax over all teachers' vote counts---where the added noise is sampled from a Gaussian distribution to implement the DP guarantees.
For each rejected or labeled data point from the public dataset, privacy costs are accumulated and the ensemble stops labeling once a target privacy budget is reached.

Our \promptB follows the general flow of standard PATE: training the \textit{teacher models}, \textit{private knowledge transfer}, and training a \textit{student model}.
However, due to the significant differences between in-context learning for LLMs and supervised learning in the original PATE and how these different paradigms leverage private and public data, we had to redesign each of these building blocks.
This allows to leverage both the data-efficiency of prompts and the rigorous privacy protection from PATE.
In the following, we present the building blocks in our \promptB. %

\vspace{-0.2cm}
\paragraph{Teacher Models (Flock of Stochastic Parrots).}
Instead of \textit{training} teacher models on disjoint partitions of the private data, we use the private data to create disjoint prompts for the LLM.
More specifically, we use examples, for instance \{(\textit{"The movie was great.", "positive"}), ...\}, from the private training data to create prompts that can then be deployed with the LLM as teachers.

\vspace{-0.2cm}
\paragraph{Private Knowledge Transfer.}
During the private knowledge transfer, the teachers label public data sequences, such as (\textit{"I did enjoy it."}, \_). %
Each teacher votes with the most likely class labels for the private downstream task.
In \Cref{appendix}, we show that \promptB can also operate directly on pure next token predictions from Claude~\cite{antropicclaude} without access to per-token probabilities---enabling full black-box private prompts. %
By performing the private voting process according to standard PATE with the Confident GNMAX algorithm, we turn our per-teacher predictions into a final class label token that will be appended to the sequence, \eg  (\textit{"I did enjoy it"}, "positive").
The privacy accounting and analysis of our \promptB exactly follows the one of standard PATE~\cite{papernot2018scalable}.

\vspace{-0.2cm}
\paragraph{Student.}
The most naive way to obtain a student model following standard PATE would be to label many public sequences and train a language classifier using supervised learning on this data. However, due to the relatively high number of data needed for supervised learning, and the fact that each query to the private teachers consumes privacy, this process would incur high privacy costs.
We propose a better approach building on the data-efficiency of prompting~\citep{scao2021many} by using labeled public sequences to create new discrete student prompts. The selected prompt can then be deployed with the LLM as the \promptB student model.

In theory, labeling one public sequence by the ensemble would be sufficient to create such a prompt.
This approach yields negligible privacy costs, but the resulting prompt might not have good utility due to the high variance in the performance of prompts \citep{zhao2021calibrate}.
Therefore, we generate multiple prompts based on different labeled public sequences and perform prompt tuning to select the best student prompt.  
Care must be taken during selection: utility cannot be evaluated on the private data anymore given that the prompt will be publicly deployed and selecting based on the private data would incur additional privacy costs.
We solve this tension by using parts of the newly-labeled public data as validation data to assess utility of the student prompts. By selecting the prompt with the highest validation accuracy, we deploy the student prompt that most resembles the private teachers.

\subsection{Advantages of (Private) Prompting over (Private) Fine-Tuning}
\label{sub:prompts-vs-tuning}
Our private prompt learning enables us to leverage the general advantages of prompting over fine-tuning while preserving privacy.
Private prompting requires significantly less storage than private fine-tuning.
While fine-tuning requires storing a separate copy of the LLM model for each downstream task~\citep{lester2021power}, prompts operate only on the input level of LLMs without adapting model parameters, such that only a small task-specific prompt needs to be stored for each downstream task. 
For example, each copy of the fine-tuned RoBERTa base model requires 125M parameters ($\mathtt{\sim}$500MB). This becomes prohibitively expensive, especially as the number of parameters for state-of-the-art LLMs rapidly increases. 
In contrast, soft-prompts and prefix, as the one generated by \promptA (using implementation from~\citep{liu2021p2}) with the standard prompt length of 10 tokens require less than 10K parameters (40KB) for the soft-prompt and 100K parameters (400KB) for the prefix. %
A discrete prompt, such as the one generated in \promptB, requires less than 1 KB of prepended text. 
Prompts also enable processing many examples from different tasks in a single batch~\citep{li2021prefix}, called mixed-task inference. 
This allows more efficient use of LLMs since we do not have to wait for a sufficient number of requests for a single task before processing them. This is not possible with any form of fine-tuning, where the fine-tuned model can serve solely a single task.

\section{Experimental Evaluation}
We evaluate both \promptA and \promptB and show that they match the performance of non-private prompting while providing strong privacy guarantees.

\subsection{\promptA}

\addtolength{\tabcolsep}{-0pt} 
\begin{table}[tb]
    \centering
    \begin{tabular}{cccccccccc}
    \toprule
     \multirow{3}{*}{Dataset} & \textbf{M} & \multicolumn{2}{c}{ Soft-Prompt (Our)} & \multicolumn{2}{c}{Prefix (Our)} & \multicolumn{2}{c}{Full-Tuning~\cite{li2022large}} & \multicolumn{2}{c}{LoRA-Tuning~\cite{yu2022differentially}} \\
     \cmidrule{3-10} %
      & \textbf{P} & \multicolumn{2}{c}{ <10K}  & \multicolumn{2}{c}{<100K} & \multicolumn{2}{c}{125M} & \multicolumn{2}{c}{1.2M}  \\
      \cmidrule{3-10} 
      & \textbf{G} & $\varepsilon=8$ & $\varepsilon=\infty$  & $\varepsilon=8$ & $\varepsilon=\infty$ & $\varepsilon=8$ & $\varepsilon=\infty$ & $\varepsilon=8$ & $\varepsilon=\infty$\\ 
    \midrule
     \multicolumn{1}{c}{sst2}  && 92.31 & 95.64 & 91.97 & 96.33 & 85.89 & 96.40 & 92.97 & 96.60 \\
     \multicolumn{1}{c}{qnli}  && 84.11 & 89.48 & 87.17 & 94.84 & 84.81 & 94.70 & 88.59 & 94.70 \\
     \multicolumn{1}{c}{qqp}   && 81.52 & 86.56 & 82.58 & 91.42 & 86.15 & 92.20 & 86.26 & 92.20\\
     \multicolumn{1}{c}{mnli}  && 75.15 & 82.49 & 80.57 & 90.34 & 83.30 & 90.20 & 82.92& 90.20 \\
    \bottomrule
    \end{tabular}
    \vspace{0.5em}
    \caption{\textbf{Performance of \promptA.} We report the accuracy values (\%) for each dataset. All $\varepsilon$ values are reported as standard DP guarantees. We run the experiment on RoBERTa~\citep{liu2020roberta}. The first row \textbf{M:} the type of the private \textbf{M}ethod, the second row \textbf{P:} the number of \textbf{P}arameters tuned for the method, and the third row \textbf{G:} DP \textbf{G}uarantee. We also present results for $\varepsilon=3$ in \Cref{appendix}.
    }
    \label{tab:soft_prompt_classification}
\end{table}
\addtolength{\tabcolsep}{0pt}

\paragraph{Experimental Setup.} 
To train soft-prompts and prefix, we follow the experimental setup from prior work on DP fine-tuning. %
Specifically, we use differentially-private optimization engines for transformers, such as models from the BERT family for the language understanding tasks. The experimental results for classification were performed on the RoBERTa models~\citep{liu2020roberta}, using the standard NLP datasets, namely \sst, \qnli, \qqp, and \mnli,
from the GLUE benchmark~\citep{wang2018glue}. Our implementation for soft-prompt and prefix is based on P-Tuning v2~\citep{liu2021p2}. To tune the (hyper)parameters for \promptA, we adjust the length of the soft-prompt or prefix in the private setting (with the default value of $10$, which commonly yields good performance). For the privacy parameters, we set the $\delta=1/N$, where $N$ is the number of data points in a given dataset, The clipping threshold of per-example gradients is set to $0.1$ in most cases. We use a batch size of 1024. The detailed selection of (hyper-)parameters is presented in \Cref{app:setup}.

\paragraph{Results.} We compare our \promptA against state-of-the-art approaches for private fine-tuning on multiple private downstream datasets.
Our results are shown in \Cref{tab:soft_prompt_classification}. 
We highlight that both soft prompts and prefix provide competitive privacy utility trade-offs. For example, the difference in accuracy values between the non-private baseline and the private soft prompt ranges from 3\% (for the simplest \sst dataset) and up to 7\% (for the most difficult \mnli dataset). This mirrors results for other private methods, such as the private fine-tuning of LoRA~\citep{yu2022differentially}. %
We also observe that, similarly, for simple tasks, such as \sst or \qnli, the performance of soft prompt or prefix matches the one of fine-tuning. For the more difficult tasks, namely \qqp and \mnli, the performance of prefix and soft prompts is also relatively close to fine-tuning.
The results obtained for these methods are highly influenced by the number of optimized parameters. For example, for the SST2 task and the RoBERTa-Base model, the prefix requires 19970 additional parameters while soft prompt adds solely 2306 parameters. On the other hand, the number of privately tuned parameters is a few orders of magnitude bigger for fine-tuning and equal to the size of the trained model, namely 125M for the method proposed in~\citep{li2022large}, while the fine-tuning approach from~\citep{yu2022differentially} optimizes around 1.2M parameters.
Our results reflect a general trend, where prompts are suited for small downstream tasks while fine-tuning with its bigger number of parameters can also cater to more complex tasks with larger training data sets. %

\vspace{-0.3cm}
\subsection{\promptB}

\addtolength{\tabcolsep}{-2pt} 
\begin{table}[t]
    \centering
    \begin{tabular}{lcccccccccc}
    \toprule
    & Lower & Ens.&Upper & & \multicolumn{6}{c}{Our \promptB} \\ 
         & Bound& Acc.& Bound & & \multicolumn{3}{c}{\textbf{IID} Transfer}& \multicolumn{3}{c}{\textbf{OOD} Transfer}\\
    \cmidrule{2-4} \cmidrule{6-11}
     Private & $\varepsilon=0$ &  $\varepsilon = \infty$&$\varepsilon = \infty$ & & Public& $\varepsilon$ & Test acc & Public & $\varepsilon$ & Test acc   \\
     \midrule 
    \sst & 76.3 & $90.0$ &$93.8$ & & \sst  & $0.178$ & $88.8${\tiny $\pm2.3$} & \imdb & $0.187$  & $87.2${\tiny$\pm1.9$} \\ 
     \agnews & 62.0 & $72.8$ &$78.2$ & & \agnews  & $0.248$ &  $71.7${\tiny $\pm0.8$}  & arisetv & 0.258  &  $67.9${\tiny $\pm1.7$}\\ 
     \trec & 40.7 & $57.6$ &$58.7$ & & \trec  & $0.281$  & $52.8$ {\tiny $\pm1.5$} & \qqp & 0.293 &  $50.9${\tiny $\pm3.5$} \\ 
  \dbpedia & 44.2 & $81.6$ &$85.6$ & & \dbpedia & $0.194$ & $80.3$ {\tiny $\pm1.3$} & \agnews & 0.203  & $74.6${\tiny $\pm1.4$} \\ 
  \hdashline
    \sst (C) & 82.0 & $94.0$ & $95.2$ & & \sst & $0.147$ & $92.3$ {\tiny $\pm 1.1$} & \imdb & $0.154$ & $92.7$ {\tiny $\pm 0.8$} \\
    \agnews (4) & 62.0 & $75.8$ & $81.0$ & & \agnews & $0.145$ & $73.5$ {\tiny $\pm 1.2$}& arisetv & $0.145$ & $69.6$ {\tiny $\pm 1.8$}\\
    \bottomrule
    \end{tabular}
    \vspace{0.5em}
    \caption{\textbf{Performance of \promptB.} We compare \promptB with three baselines: zero-shot (Lower Bound), the ensemble's accuracy (Ens. Acc), and the non-private baseline (Upper Bound) on four classification benchmarks. We study two settings, (IID Transfer) when the public dataset is from the same and (OOD Transfer) different distribution than the private data. We find that \promptB achieves strong privacy protection ($\varepsilon < 0.3$  at $\delta=10^{-6}$) and utility close to the non-private and significantly higher than the zero-shot. Unless otherwise specified, the experiments are performed on GPT3-Babbage with one-shot prompts. Additionally, we also run experiments on GPT3-Curie for \sst (C) and 4-shot prompts for \agnews (4). 
    }
    \label{tab:PATE_GPT3}
\end{table}
\addtolength{\tabcolsep}{2pt}

\paragraph{Experimental Setup.} \textbf{Teachers:} Unless otherwise specified, we rely on GPT3-Babbage as the base LLM and select one-shot examples randomly without replacement from the private downstream task as prompt data. Our prompt template follows Zhao \etal~\citep{zhao2021calibrate}. For each setting, we deploy $200$ teacher prompts. 
\textbf{Private knowledge transfer:} We use the implementation of PATE's Confident GNMAX algorithm and the privacy accounting from~\cite{choquette2021capc} and report our algorithm's hyperparameters in \Cref{app:setup}. %
\textbf{Student:} For each private downstream task, we experiment with two setups (1) selecting public input sequences from the same (IID) and (2) from a different distribution (OOD) as the private data. We introduce three new datasets for the OOD setup: imdb \citep{maas2011learning}, arisetv \citep{arisetv} and qqp \citep{wang2017bilateral}. The details of preprocessing these datasets can be found in \Cref{app:setup}. 
In both the IID and OOD setup, we limit the size of the public dataset to $500$ input sequences from the respective datasets. %
After the ensemble finishes labelling, we select the best labeled public sequence as prompt data based on the validation accuracy on the labeled public set. 
We repeat the process three times and report average and standard deviation of the test accuracy for the selected student prompt on the private test set.
To improve utility, both teachers' and students' output probabilities from GPT3 are recalibrated using contexual calibration \citep{zhao2021calibrate}.

\paragraph{Results.} 

We compare \promptB against three baselines: the lower bound baseline represented by a zero-shot prediction ($\varepsilon=0$), \ie when the LLM is only prompted with an instruction, the private ensemble accuracy  ($\varepsilon=\infty$), and the upper bound as a non-private one-shot prediction ($\varepsilon=\infty$) using the best example from the private data as prompt data. (To save costs, we select from $200$ candidates.)  %
\Cref{tab:PATE_GPT3} shows that, over all setups, \promptB achieves similar utility to the non-private baseline and significantly improves over zero-shot predictions---even at very strong privacy protection ($\varepsilon<0.3$, $\delta=10^{-6}$).
Our results also highlight that the distribution of the public data does not need to be very close to the distribution of the private data to yield high-utility student prompts.
For example, they can be collected from different domains (\dbpedia holds extracts from wikipedia while its public data \agnews contains news articles) and for different tasks (\trec aims to classify the topic of a given answer while \qqp serves to measure the similarity of two questions). 
Still, with \dbpedia being the private downstream data and \agnews as public, we achieve an accuracy of 74.6\%, which is significantly higher than the zero-shot baseline with 44.2\%.%

We also provide further insights into the privacy-utility trade-offs that can be achieved with \promptB in \Cref{fig:utility_tradeoff}.
Our results highlight that with more public sequences queried to the ensemble, the privacy consumption increases while, after roughly $100$ queries, with even $\varepsilon<0.2$, the student model's test accuracy saturates. %
This yields very favorable privacy-utility trade-offs which we attribute mainly to the data efficiency of discrete prompts:
Even from within as little as $100$ labeled examples, a high-performing student prompt can be derived.
Additionally, we observe that the per-query privacy costs of \promptB are relatively low, further benefiting the privacy-utility trade-off.
The small privacy costs result from the high consensus between the teacher predictions\footnote{As motivated in \Cref{sub:pate_prompt}, high consensus reveals less information about the private decision boundary, and hence incurs smaller privacy costs.}, see \Cref{fig:pate_consensus}---that might result from all teachers relying on the same underlying LLM, just with different prompts.
 
\paragraph{Scalability.} Finally, we also study how \promptB scales with larger LLMs and more examples in the prompt. We experiment with a more performant LLM (GPT3-Currie) for \sst. Due to the higher per-query costs, we are not able to repeat this experiment for all datasets. Our results show that the performance of our private prompt increases together with the performance of the public prompt (92.7\% accuracy on Curie vs. 87.2\% on Babbage) while the privacy budget $\epsilon$ decreases (from $0.178$ to $0.147$). 
To investigate flexibility in terms of numbers of private examples provided as prompt data, we also experiment for \agnews with 4-shot teachers. Similar to the non-private study ~\cite{zhao2021calibrate} that reports improvements for \agnews in the 4-shot setting over 1-shot, we observe that this improvement also translates to the private prompt. Our results indicate that with increasingly more powerful LLMs and larger context windows, private prompting will increase further in terms of privacy-utility trade-offs.

\begin{figure}[t]
        \begin{subfigure}[b]{0.43\textwidth}
            \centering
            \includegraphics[width=0.8\textwidth]{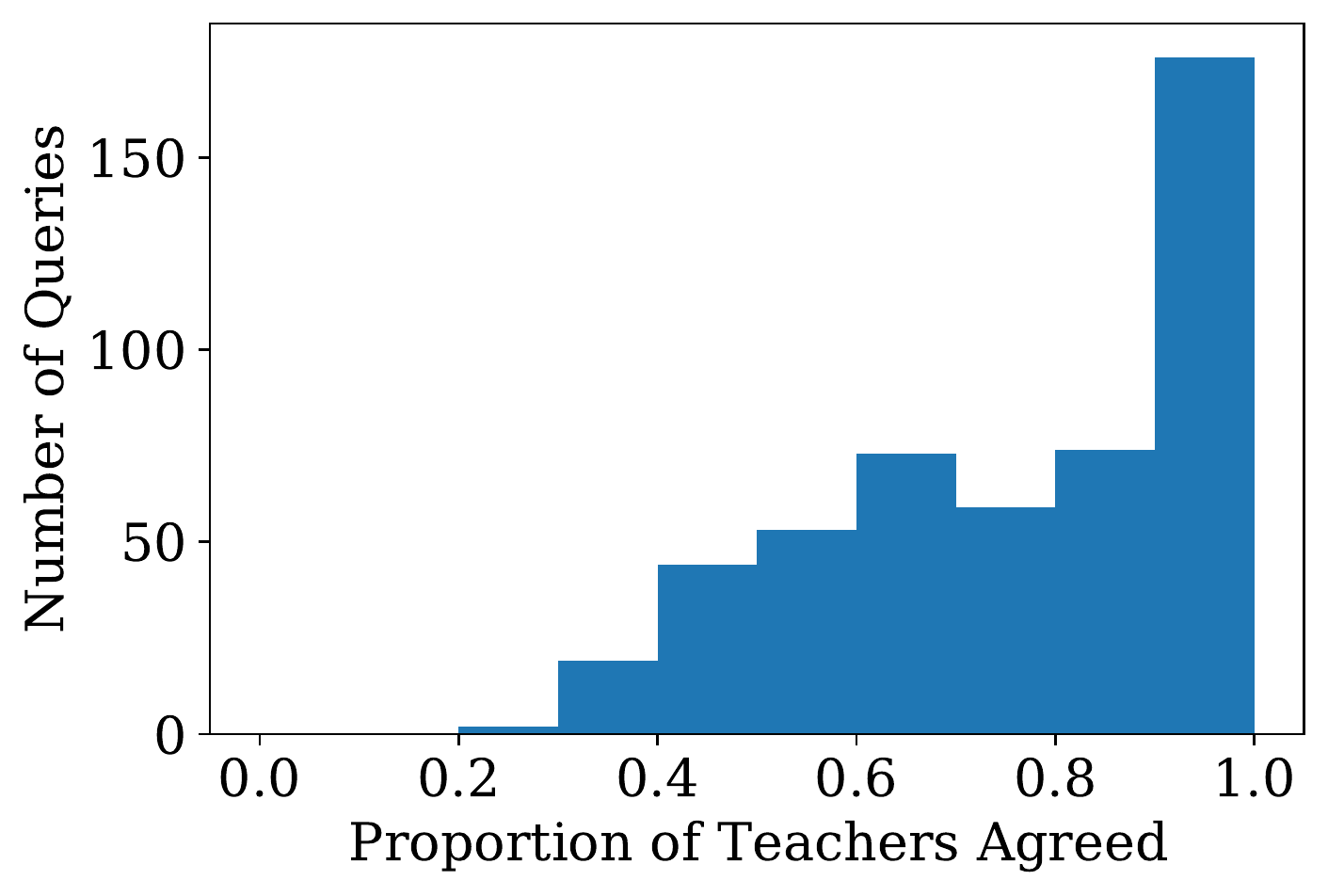}
        \caption{Teacher Consensus}
        \label{fig:pate_consensus}
        \end{subfigure}
        \hfill
        \begin{subfigure}[b]{0.43\textwidth}  
            \centering 
            \includegraphics[width=0.8\textwidth]{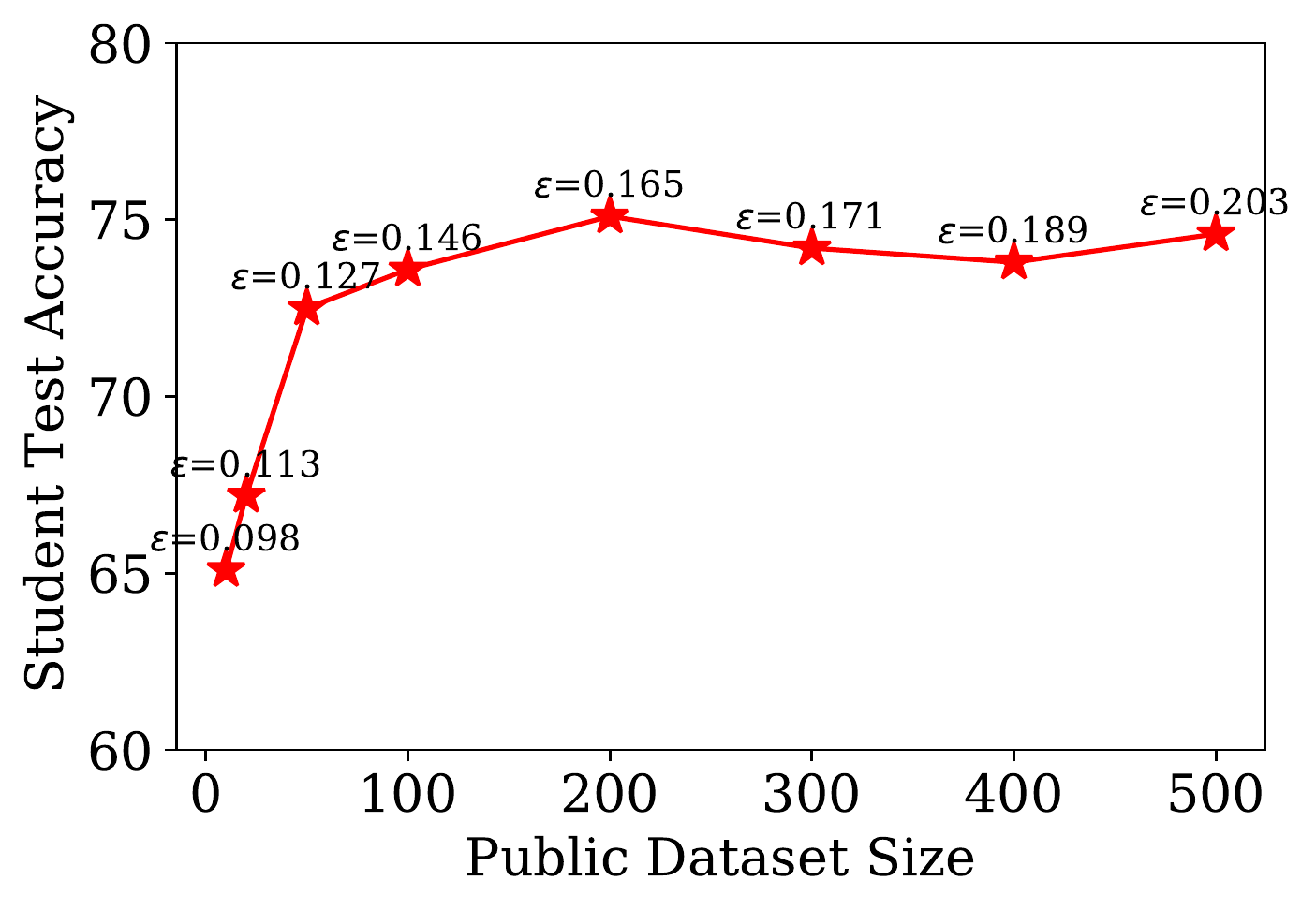}
        \caption{Utility-privacy tradeoff}
        \label{fig:utility_tradeoff}
        \end{subfigure}
        \hfill
        \caption{\textbf{Additional Insights of \promptB.} 
        We perform ablation studies on GPT3-Babbage and use \dbpedia as private and \agnews as public data.
        \textit{Left:} Teacher consensus as the fraction of teachers who vote for the correct class over 500 public input sequences. \promptB achieves overall high consensus.
        \textit{Right:} Student accuracy as a function of the public query set's size. Already with as few as 100 queries, we observe a plateau in accuracy which highlights \promptB's data efficiency.
        } 
        \label{fig:pate_ablation}
\end{figure}

\section{Conclusions and Outlook}
By instantiating the first simple yet effective membership inference attack against prompted LLMs, we show that they leak private information about their prompt data.
We propose private prompt learning as a holistic and broadly applicable new approach to mitigate this risk.
We first introduce \promptA that enables to train soft-prompts with privacy guarantees.
In contrast to fine-tuning, soft prompts optimize significantly fewer parameters and do not require any update of LLM parameters or changes to its architecture.
As the first solution to private downstream learning with LLMs in black-box access scenarios, we propose \promptB.
\promptB builds on the highly data-efficient discrete prompts and implements privacy through a noisy knowledge transfer.
Through our evaluation against two popular LLMs deployed behind commercial black-box APIs (GPT3 and Claude)~\cite{brown2020language,antropicclaude}, we highlight that this method yields downstream performance that matches the one of non-private prompting at very strong privacy guarantees.
As LLMs rapidly improve and increase in size, prompts are achieving consistently higher performance while fine-tuning becomes more challenging at this scale. This suggests that privacy protections for prompts will become even more important, especially as context sizes expand.

\section*{Acknowledgments}
We would like to acknowledge our sponsors, who support our research with financial and in-kind contributions: Amazon, Apple, CIFAR through the Canada CIFAR AI Chair, DARPA through the GARD project, Intel, Meta, NSERC through a Discovery Grant, the Ontario Early Researcher Award, and the Sloan Foundation. Resources used in preparing this research were provided, in part, by the Province of Ontario, the Government of Canada through CIFAR, and companies sponsoring the Vector Institute. We also thank members of the CleverHans Lab for their feedback. 

\bibliographystyle{plain}
\bibliography{ms}

\begin{thebibliography}{10}

\bibitem{abadi2016deep}
Martin Abadi, Andy Chu, Ian Goodfellow, H~Brendan McMahan, Ilya Mironov, Kunal
  Talwar, and Li~Zhang.
\newblock Deep learning with differential privacy.
\newblock In {\em Proceedings of the 2016 ACM SIGSAC conference on computer and
  communications security}, pages 308--318, 2016.

\bibitem{anil2021large}
Rohan Anil, Badih Ghazi, Vineet Gupta, Ravi Kumar, and Pasin Manurangsi.
\newblock Large-scale differentially private bert.
\newblock {\em arXiv preprint arXiv:2108.01624}, 2021.

\bibitem{antropicclaude}
Antropic.
\newblock Introducing claude.
\newblock {\em Antropic Website}, 2023.
\newblock 2023-03-14, \url{https://www.anthropic.com/index/introducing-claude}.

\bibitem{bassily2014private}
Raef Bassily, Adam Smith, and Abhradeep Thakurta.
\newblock Private empirical risk minimization: Efficient algorithms and tight
  error bounds.
\newblock In {\em 2014 IEEE 55th annual symposium on foundations of computer
  science}, pages 464--473. IEEE, 2014.

\bibitem{bender2021parrots}
Emily~M. Bender, Timnit Gebru, Angelina McMillan-Major, and Shmargaret
  Shmitchell.
\newblock On the dangers of stochastic parrots: Can language models be too big?
\newblock In {\em Proceedings of the 2021 ACM Conference on Fairness,
  Accountability, and Transparency}, FAccT '21, page 610–623, New York, NY,
  USA, 2021. Association for Computing Machinery.

\bibitem{brown2020language}
Tom Brown, Benjamin Mann, Nick Ryder, Melanie Subbiah, Jared~D Kaplan, Prafulla
  Dhariwal, Arvind Neelakantan, Pranav Shyam, Girish Sastry, Amanda Askell,
  et~al.
\newblock Language models are few-shot learners.
\newblock {\em Advances in neural information processing systems},
  33:1877--1901, 2020.

\bibitem{carlini2022membership}
Nicholas Carlini, Steve Chien, Milad Nasr, Shuang Song, Andreas Terzis, and
  Florian Tramer.
\newblock Membership inference attacks from first principles.
\newblock In {\em 2022 IEEE Symposium on Security and Privacy (SP)}, pages
  1897--1914. IEEE, 2022.

\bibitem{carlini2022quantifying}
Nicholas Carlini, Daphne Ippolito, Matthew Jagielski, Katherine Lee, Florian
  Tramer, and Chiyuan Zhang.
\newblock Quantifying memorization across neural language models.
\newblock {\em arXiv preprint arXiv:2202.07646}, 2022.

\bibitem{carlini2021extracting}
Nicholas Carlini, Florian Tramer, Eric Wallace, Matthew Jagielski, Ariel
  Herbert-Voss, Katherine Lee, Adam Roberts, Tom Brown, Dawn Song, Ulfar
  Erlingsson, et~al.
\newblock Extracting training data from large language models.
\newblock In {\em 30th USENIX Security Symposium (USENIX Security 21)}, pages
  2633--2650, 2021.

\bibitem{googlelamda}
Heng-Tze Cheng and Romal Thoppilan.
\newblock Lamda: Towards safe, grounded, and high-quality dialog models for
  everything.
\newblock {\em Google Blog Post}, 2022.
\newblock 2023-05-09,
  \url{https://ai.googleblog.com/2022/01/lamda-towards-safe-grounded-and-high.html}.

\bibitem{arisetv}
Okite chimaobi Samuel.
\newblock news-data.
\newblock {\em Huggingface}, 2022.

\bibitem{choquette2021capc}
Christopher~A Choquette-Choo, Natalie Dullerud, Adam Dziedzic, Yunxiang Zhang,
  Somesh Jha, Nicolas Papernot, and Xiao Wang.
\newblock Capc learning: Confidential and private collaborative learning.
\newblock In {\em International Conference on Learning Representations}, 2021.

\bibitem{davison2019commonsense}
Joe Davison, Joshua Feldman, and Alexander~M Rush.
\newblock Commonsense knowledge mining from pretrained models.
\newblock In {\em Proceedings of the 2019 conference on empirical methods in
  natural language processing and the 9th international joint conference on
  natural language processing (EMNLP-IJCNLP)}, pages 1173--1178, 2019.

\bibitem{devlin2018bert}
Jacob Devlin, Ming-Wei Chang, Kenton Lee, and Kristina Toutanova.
\newblock Bert: Pre-training of deep bidirectional transformers for language
  understanding.
\newblock {\em arXiv preprint arXiv:1810.04805}, 2018.

\bibitem{dwork2006differential}
Cynthia Dwork.
\newblock Differential privacy.
\newblock In {\em Automata, Languages and Programming: 33rd International
  Colloquium, ICALP 2006, Venice, Italy, July 10-14, 2006, Proceedings, Part II
  33}, pages 1--12. Springer, 2006.

\bibitem{gao2020making}
Tianyu Gao, Adam Fisch, and Danqi Chen.
\newblock Making pre-trained language models better few-shot learners.
\newblock {\em arXiv preprint arXiv:2012.15723}, 2020.

\bibitem{googlebard}
Google.
\newblock Lamda: Towards safe, grounded, and high-quality dialog models for
  everything.
\newblock {\em Website}, 2023.
\newblock 2023, \url{https://bard.google.com/}.

\bibitem{guo2022efficient}
Han Guo, Bowen Tan, Zhengzhong Liu, Eric Xing, and Zhiting Hu.
\newblock Efficient (soft) q-learning for text generation with limited good
  data.
\newblock In {\em Findings of the Association for Computational Linguistics:
  EMNLP 2022}, pages 6969--6991, 2022.

\bibitem{hoory2021learning}
Shlomo Hoory, Amir Feder, Avichai Tendler, Sofia Erell, Alon Peled-Cohen, Itay
  Laish, Hootan Nakhost, Uri Stemmer, Ayelet Benjamini, Avinatan Hassidim,
  et~al.
\newblock Learning and evaluating a differentially private pre-trained language
  model.
\newblock In {\em Findings of the Association for Computational Linguistics:
  EMNLP 2021}, pages 1178--1189, 2021.

\bibitem{ippolito2022preventing}
Daphne Ippolito, Florian Tram{\`e}r, Milad Nasr, Chiyuan Zhang, Matthew
  Jagielski, Katherine Lee, Christopher~A Choquette-Choo, and Nicholas Carlini.
\newblock Preventing verbatim memorization in language models gives a false
  sense of privacy.
\newblock {\em arXiv preprint arXiv:2210.17546}, 2022.

\bibitem{jayaraman2021revisiting}
Bargav Jayaraman, Lingxiao Wang, Katherine Knipmeyer, Quanquan Gu, and David
  Evans.
\newblock Revisiting membership inference under realistic assumptions.
\newblock {\em Proceedings on Privacy Enhancing Technologies}, 2021(2), 2021.

\bibitem{jiang2020can}
Zhengbao Jiang, Frank~F Xu, Jun Araki, and Graham Neubig.
\newblock How can we know what language models know?
\newblock {\em Transactions of the Association for Computational Linguistics},
  8:423--438, 2020.

\bibitem{kharitonov2021bpe}
Eugene Kharitonov, Marco Baroni, and Dieuwke Hupkes.
\newblock How bpe affects memorization in transformers.
\newblock {\em arXiv preprint arXiv:2110.02782}, 2021.

\bibitem{lester2021power}
Brian Lester, Rami Al-Rfou, and Noah Constant.
\newblock The power of scale for parameter-efficient prompt tuning.
\newblock In {\em Proceedings of the 2021 Conference on Empirical Methods in
  Natural Language Processing}, November 2021.

\bibitem{li2021prefix}
Xiang~Lisa Li and Percy Liang.
\newblock Prefix-tuning: Optimizing continuous prompts for generation.
\newblock In {\em Proceedings of the 59th Annual Meeting of the Association for
  Computational Linguistics and the 11th International Joint Conference on
  Natural Language Processing (Volume 1: Long Papers)}, pages 4582--4597, 2021.

\bibitem{li2022large}
Xuechen Li, Florian Tramer, Percy Liang, and Tatsunori Hashimoto.
\newblock Large language models can be strong differentially private learners.
\newblock In {\em International Conference on Learning Representations}, 2022.

\bibitem{liu2021makes}
Jiachang Liu, Dinghan Shen, Yizhe Zhang, Bill Dolan, Lawrence Carin, and Weizhu
  Chen.
\newblock What makes good in-context examples for gpt-$3 $?
\newblock {\em arXiv preprint arXiv:2101.06804}, 2021.

\bibitem{liu2022p}
Xiao Liu, Kaixuan Ji, Yicheng Fu, Weng Tam, Zhengxiao Du, Zhilin Yang, and Jie
  Tang.
\newblock {P}-tuning: Prompt tuning can be comparable to fine-tuning across
  scales and tasks.
\newblock In {\em Proceedings of the 60th Annual Meeting of the Association for
  Computational Linguistics (Volume 2: Short Papers)}, pages 61--68, Dublin,
  Ireland, May 2022. Association for Computational Linguistics.

\bibitem{liu2021p2}
Xiao Liu, Kaixuan Ji, Yicheng Fu, Weng~Lam Tam, Zhengxiao Du, Zhilin Yang, and
  Jie Tang.
\newblock P-tuning v2: Prompt tuning can be comparable to fine-tuning
  universally across scales and tasks.
\newblock {\em arXiv preprint arXiv:2110.07602}, 2021.

\bibitem{liu2020roberta}
Yinhan Liu, Myle Ott, Naman Goyal, Jingfei Du, Mandar Joshi, Danqi Chen, Omer
  Levy, Mike Lewis, Luke Zettlemoyer, and Veselin Stoyanov.
\newblock Ro{\{}bert{\}}a: A robustly optimized {\{}bert{\}} pretraining
  approach, 2020.

\bibitem{maas2011learning}
Andrew Maas, Raymond~E Daly, Peter~T Pham, Dan Huang, Andrew~Y Ng, and
  Christopher Potts.
\newblock Learning word vectors for sentiment analysis.
\newblock In {\em Proceedings of the 49th annual meeting of the association for
  computational linguistics: Human language technologies}, pages 142--150,
  2011.

\bibitem{mccoy2021much}
R~Thomas McCoy, Paul Smolensky, Tal Linzen, Jianfeng Gao, and Asli Celikyilmaz.
\newblock How much do language models copy from their training data? evaluating
  linguistic novelty in text generation using raven.
\newblock {\em arXiv preprint arXiv:2111.09509}, 2021.

\bibitem{mireshghallah2022memorization}
Fatemehsadat Mireshghallah, Archit Uniyal, Tianhao Wang, David Evans, and
  Taylor Berg-Kirkpatrick.
\newblock Memorization in nlp fine-tuning methods.
\newblock {\em arXiv preprint arXiv:2205.12506}, 2022.

\bibitem{samsungleakage}
Robin Mitchell.
\newblock Samsung fab data leak: How chatgpt exposed sensitive information.
\newblock {\em electropages}, 2023.

\bibitem{openai2023gpt4}
OpenAI.
\newblock Gpt-4 technical report, 2023.

\bibitem{papernot2017semi}
Nicolas Papernot, Mart{\'\i}n Abadi, {\'U}lfar Erlingsson, Ian Goodfellow, and
  Kunal Talwar.
\newblock Semi-supervised knowledge transfer for deep learning from private
  training data.
\newblock In {\em International Conference on Learning Representations}, 2017.

\bibitem{papernot2018scalable}
Nicolas Papernot, Shuang Song, Ilya Mironov, Ananth Raghunathan, Kunal Talwar,
  and Ulfar Erlingsson.
\newblock Scalable private learning with pate.
\newblock In {\em International Conference on Learning Representations}, 2022.

\bibitem{petroni2019language}
Fabio Petroni, Tim Rockt{\"a}schel, Patrick Lewis, Anton Bakhtin, Yuxiang Wu,
  Alexander~H Miller, and Sebastian Riedel.
\newblock Language models as knowledge bases?
\newblock {\em arXiv preprint arXiv:1909.01066}, 2019.

\bibitem{radford2018improving}
Alec Radford, Karthik Narasimhan, Tim Salimans, Ilya Sutskever, et~al.
\newblock Improving language understanding by generative pre-training.
\newblock {\em OpenAI}, 2018.

\bibitem{radford2019language}
Alec Radford, Jeffrey Wu, Rewon Child, David Luan, Dario Amodei, Ilya
  Sutskever, et~al.
\newblock Language models are unsupervised multitask learners.
\newblock {\em OpenAI blog}, 1(8):9, 2019.

\bibitem{raffel2020exploring}
Colin Raffel, Noam Shazeer, Adam Roberts, Katherine Lee, Sharan Narang, Michael
  Matena, Yanqi Zhou, Wei Li, and Peter~J Liu.
\newblock Exploring the limits of transfer learning with a unified text-to-text
  transformer.
\newblock {\em The Journal of Machine Learning Research}, 21(1):5485--5551,
  2020.

\bibitem{scao2021many}
Teven~Le Scao and Alexander~M Rush.
\newblock How many data points is a prompt worth?
\newblock {\em arXiv preprint arXiv:2103.08493}, 2021.

\bibitem{shejwalkar2021membership}
Virat Shejwalkar, Huseyin~A Inan, Amir Houmansadr, and Robert Sim.
\newblock Membership inference attacks against {NLP} classification models.
\newblock In {\em NeurIPS 2021 Workshop Privacy in Machine Learning}, 2021.

\bibitem{shin2020autoprompt}
Taylor Shin, Yasaman Razeghi, Robert~L Logan~IV, Eric Wallace, and Sameer
  Singh.
\newblock Autoprompt: Eliciting knowledge from language models with
  automatically generated prompts.
\newblock {\em arXiv preprint arXiv:2010.15980}, 2020.

\bibitem{shokri2017membership}
Reza Shokri, Marco Stronati, Congzheng Song, and Vitaly Shmatikov.
\newblock Membership inference attacks against machine learning models.
\newblock In {\em 2017 IEEE symposium on security and privacy (SP)}, pages
  3--18. IEEE, 2017.

\bibitem{socher2013recursive}
Richard Socher, Alex Perelygin, Jean Wu, Jason Chuang, Christopher~D Manning,
  Andrew~Y Ng, and Christopher Potts.
\newblock Recursive deep models for semantic compositionality over a sentiment
  treebank.
\newblock In {\em Proceedings of the 2013 conference on empirical methods in
  natural language processing}, pages 1631--1642, 2013.

\bibitem{song2019auditing}
Congzheng Song and Vitaly Shmatikov.
\newblock Auditing data provenance in text-generation models.
\newblock In {\em Proceedings of the 25th ACM SIGKDD International Conference
  on Knowledge Discovery \& Data Mining}, pages 196--206, 2019.

\bibitem{tirumala2022memorization}
Kushal Tirumala, Aram~H Markosyan, Luke Zettlemoyer, and Armen Aghajanyan.
\newblock Memorization without overfitting: Analyzing the training dynamics of
  large language models.
\newblock {\em arXiv preprint arXiv:2205.10770}, 2022.

\bibitem{tramer2021differentially}
Florian Tramer and Dan Boneh.
\newblock Differentially private learning needs better features (or much more
  data).
\newblock In {\em International Conference on Learning Representations}, 2021.

\bibitem{voorhees2000building}
Ellen~M Voorhees and Dawn~M Tice.
\newblock Building a question answering test collection.
\newblock In {\em Proceedings of the 23rd annual international ACM SIGIR
  conference on Research and development in information retrieval}, pages
  200--207, 2000.

\bibitem{wang2018glue}
Alex Wang, Amanpreet Singh, Julian Michael, Felix Hill, Omer Levy, and
  Samuel~R. Bowman.
\newblock {GLUE}: A multi-task benchmark and analysis platform for natural
  language understanding.
\newblock In {\em International Conference on Learning Representations}, 2019.

\bibitem{wang2017bilateral}
Zhiguo Wang, Wael Hamza, and Radu Florian.
\newblock Bilateral multi-perspective matching for natural language sentences.
\newblock In {\em Proceedings of the 26th International Joint Conference on
  Artificial Intelligence}, pages 4144--4150, 2017.

\bibitem{yeom2018privacy}
Samuel Yeom, Irene Giacomelli, Matt Fredrikson, and Somesh Jha.
\newblock Privacy risk in machine learning: Analyzing the connection to
  overfitting.
\newblock In {\em 2018 IEEE 31st computer security foundations symposium
  (CSF)}, pages 268--282. IEEE, 2018.

\bibitem{yu2022differentially}
Da~Yu, Saurabh Naik, Arturs Backurs, Sivakanth Gopi, Huseyin~A Inan, Gautam
  Kamath, Janardhan Kulkarni, Yin~Tat Lee, Andre Manoel, Lukas Wutschitz,
  Sergey Yekhanin, and Huishuai Zhang.
\newblock Differentially private fine-tuning of language models.
\newblock In {\em International Conference on Learning Representations}, 2022.

\bibitem{zhang2021counterfactual}
Chiyuan Zhang, Daphne Ippolito, Katherine Lee, Matthew Jagielski, Florian
  Tram{\`e}r, and Nicholas Carlini.
\newblock Counterfactual memorization in neural language models.
\newblock {\em arXiv preprint arXiv:2112.12938}, 2021.

\bibitem{zhang2022opt}
Susan Zhang, Stephen Roller, Naman Goyal, Mikel Artetxe, Moya Chen, Shuohui
  Chen, Christopher Dewan, Mona Diab, Xian Li, Xi~Victoria Lin, et~al.
\newblock Opt: Open pre-trained transformer language models.
\newblock {\em arXiv preprint arXiv:2205.01068}, 2022.

\bibitem{zhang2015character}
Xiang Zhang, Junbo Zhao, and Yann LeCun.
\newblock Character-level convolutional networks for text classification.
\newblock {\em Advances in neural information processing systems}, 28, 2015.

\bibitem{zhao2021calibrate}
Zihao Zhao, Eric Wallace, Shi Feng, Dan Klein, and Sameer Singh.
\newblock Calibrate before use: Improving few-shot performance of language
  models.
\newblock In {\em International Conference on Machine Learning}, pages
  12697--12706. PMLR, 2021.

\end{thebibliography}

\newpage

\appendix

\section{Broader Impacts}

The growing importance of in-context learning as a paradigm for leveraging LLMs on private downstream tasks has significant implications for privacy. 
We present the first approaches for obtaining prompts with privacy guarantees, thereby enabling the use of this learning paradigm on sensitive data.
This advancement has the potential to increase trust and acceptance of LLM-based systems for private applications.
Our approach \promptB   is the first viable technique for private downstream adaptation of black-box LLMs, which enables integrations into the state-of-the-art commercial LLM APIs.
We acknowledge that---as with any application that relies on DP---care must be taken when choosing the privacy parameters $\varepsilon$ and $\delta$ since setting these incorrectly can lead to a false sense of privacy.
Therefore, our work orientates at the privacy parameters that have been shown to provide reasonable protection in prior work.
Thereby, we also ensure consistency and comparability in evaluations between the different appraoches.

\section{Limitations}

\paragraph{Tuning Instructions and Templates.} For our discrete prompts, we did not tune the instructions or templates but instead relied on a template from prior work~\citep{zhao2021calibrate}. The effectiveness and performance of our \promptB could potentially be further improved by tuning the instructions and templates. 
\paragraph{Privacy Risk of Pretrained LLM:} We build on pretrained LLMs to learn and deploy our private prompts. Our methods solely target the protection of the private data used for these prompts. However, it is also important to acknowledge the inherent privacy risks for data used to pretrain the LLM. We leave the pretrainig of LLMs with privacy guarantees to an orthogonal line of work.
\paragraph{Limited Monetary Budget for our Experiments.} Due to cost limitations, we were unable to experiment with the latest and best available model, GPT4. Our experiments with GPT3-Curie in comparison to less powerful GPT3-Babbage however indicate the clear trend the our private prompts improve in performance as the non-private baseline improves due to better models. Furthermore, again due to the cost limitation, we were not able to incorporate a larger number of teachers in our experiments for \promptB. Therefore, the best non-private teacher baseline that we report might not be the best achievable if one had more teachers to choose from. We chose from 200 and note that with more (and potentially better teachers), not only the baseline but also the teacher ensemble's performance would get better.
\paragraph{Hyperparameter Tuning.} To save computation costs, we did not exhaustively tune all hyperparameters in our experiments. While our approach still achieves high utility and good privacy-utility trade-offs, we acknowledge that with more hyperparameter tuning the performance together with the understanding of optimal configurations for private prompt learning could increase.
\paragraph{Assumption of a Trusted LLM API Provider.} In our work, the API provider gets to interact with the private data, for example, through the teachers' prompts in \promptB. Therefore, we have to assume trust in the API provider. The privacy guarantees through our private prompt learning protect the privacy of the prompt data against users that interact with the prompted LLM. In practice, companies that are concerned about the privacy of their data with respect to the API provider could make contracts with the API providers on the use of their data or buy access plans that guarantee that data queried to the API is treated privately. We leave implementing cryptographic approaches that could relief the assumption on trusting the API provider entirely, for example, by enabling the LLM to run inference on encrypted private data to future work.

\section{Additional Insights into our Methods}
\label{app:method}

\subsection{\promptA}
\label{app:promptdpsgd}

We present the full \promptA algorithm in Algorithm~\ref{alg:promptdpsgd}.

\begin{algorithm}[tbh]
\small
	\caption{\promptA. In contrast to the standard DPSGD algorithm that updates model parameters during private training or fine-tuning, our \promptA privately updates the soft prompt parameters. We highlight these changes with respect to standard DPSGD training or fine-tuning in {\color{blue}blue}.}\label{alg:promptdpsgd}
 \begin{algorithmic}[1]
	\REQUIRE Private downstream data $D = \{(x_i, y_i) \mid i \in [N]\}$, 
    prompt sequence length $s$, embedding dimensionality $e$,
    trained LLM $L$ with frozen parameters,    
 loss function $\ell(L_p,x)$ for prompted LLM,  
 \textbf{Params:} learning rate $\eta_t$, noise scale $\sigma$, sampling rate $q$, max gradient norm $c$, training iterations $T$. 
		\STATE {\bf Initialize} {\color{blue}$\prompt_0 $} $\in \mathbb{R}^{s\times e}$ at random %
		\FOR{$t \in [T]$}
		\STATE {Sample mini-batch $B_t$ according to sampling rate $q$ from $D$} \Comment{Poisson sampling}
		\STATE {For each $i\in |B_t|$, compute ${\mathbf g}_t(x_i) \gets $  {\color{blue}$\nabla_{\prompt_t} $} $\ell({ \model}, x_i) $}		\Comment {Compute per sample gradient w.r.t. $p_t$}
		\STATE {$\bar{{\mathbf g}}_t(x_i) \gets {\mathbf g}_t(x_i) / \max\left(1, \frac{\|{\mathbf g}_t(x_i)\|_2}{c}\right)$} \Comment{Clip gradient}
		\STATE {$\tilde{{\mathbf g}}_t \gets \frac{1}{|B_t|}\left( \sum_i \bar{{\mathbf g}}_t(x_i) + \mathcal{N}\left(0, \sigma^2 c^2 {\mathbf I}\right)\right)$} \Comment{Add noise}
		\STATE {  {\color{blue}$\prompt_{t+1} $} $ \gets $  {\color{blue}$\prompt_{t}$} $ - \eta_t \tilde{{\mathbf g}}_t$} \Comment{Update soft prompt}
		\ENDFOR
		\STATE {\bf Output} $p_T$ and compute the overall privacy cost $(\varepsilon, \delta)$.
	\end{algorithmic}
\end{algorithm}

\subsection{\promptB}
\label{app:PATE}

\paragraph{Extended Background on PATE.}
We include the standard Confident-GNMax Aggregator Algorithm from \cite{papernot2018scalable} below. 

\begin{algorithm}[htb]
\caption{
\textbf{Confident-GNMax Aggregator by~\cite{papernot2018scalable}}
}
\label{alg:confgnmax}
\begin{algorithmic}[1] 
\REQUIRE input $x$, threshold $T$, noise parameters $\sigma_1$ and $\sigma_2$
\IF{$\max_j \{\sum_{i \in [E]} n_{i,j}(x)\} + \mathcal{N}(0, \sigma_1^2) \geq T$}
	\STATE {\bf Output} $\argmax_{j} \{\sum_{i \in [E]} n_{i,j}(\mathbf{x})+\mathcal{N}(0,\sigma^2_2)\}$
\ELSE
	\STATE {\bf Output} $\bot$
\ENDIF
\end{algorithmic}
\end{algorithm}

\subsection{Privacy Analysis}
\label{sub:privacy_analysis}

\paragraph{\promptA.}
Our \promptA can be seen as a repeated sampled Gaussian mechanism~\cite{abadi2016deep}, with sampling performed over the entirety of the private prompt dataset.
The difference to standard DPSGD for training or fine-tuning is that we do not update the model parameters, but the trainable embeddings for the soft prompts.
This is conceptually different from standard DPSGD in terms of which parameters are updated.
The privacy guarantees of the training mechanism still follow Abadi~\etal~\cite{abadi2016deep}, but with respect to the soft prompt embeddings: whether or not a particular data point will be included in the private training set used for tuning the prompt, the resulting soft prompt embeddings after training will be roughly the same.
Especially by applying the clipping operation at every step, each mechanism's sensitivity is bounded by $c$.
Privacy is then implemented as the trainable soft prompt embeddings are updated while adding noise noise drawn from $\mathcal{N}(0, c^2\sigma^2I)$.

\begin{theorem}[Privacy of \promptA]
Let $T$ be the total number of repetitions (training iterations) of our \promptA and the sampling rate be denoted by $q$.
Then, there exist two constants $c_1$ and $c_2$, such that for any $\varepsilon < c_1 q^2 T$ our \promptA guarantees $(\varepsilon, \delta)$-DP, if for any $\delta >0$, we choose the noise according to $\sigma \geq c_2\frac{qc\sqrt{T \log 1/\delta}}{\varepsilon}$.
\end{theorem}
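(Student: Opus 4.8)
The statement is essentially a restatement of the moments‑accountant bound of Abadi \etal~\cite{abadi2016deep} applied to \promptA, so the plan is to reduce \promptA to the composition of $T$ subsampled Gaussian mechanisms and then invoke their Theorem~1 verbatim. First I would observe that each iteration $t$ of Algorithm~\ref{alg:promptdpsgd} consists of (i) a Poisson subsample of the private dataset $D$ with rate $q$, (ii) a per‑example gradient computation $\mathbf g_t(x_i)=\nabla_{\prompt_t}\ell(\model,x_i)$ followed by clipping to $\ell_2$‑norm at most $c$, and (iii) the addition of spherical Gaussian noise $\mathcal N(0,\sigma^2c^2\mathbf I)$ to the summed clipped gradients. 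Steps (ii)–(iii) are exactly a Gaussian mechanism whose $\ell_2$‑sensitivity is bounded by $c$ because adding or removing one data point changes the clipped sum by a single vector of norm at most $c$; the clipping threshold is enforced at \emph{every} step, so this sensitivity bound holds uniformly over all iterations regardless of the (frozen) LLM and the current prompt iterate $\prompt_t$.

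The key point that makes the argument go through unchanged is that the privacy analysis is insensitive to \emph{which} parameters are being updated: whether we release noisy gradients with respect to model weights (as in DP fine‑tuning) or with respect to the soft‑prompt embeddings $\prompt_t\in\mathbb R^{s\times e}$, the output of each step is a noised, bounded‑sensitivity statistic of the minibatch, and the prompt update $\prompt_{t+1}\gets\prompt_t-\eta_t\tilde{\mathbf g}_t$ is a post‑processing of that noised statistic. Hence the per‑step privacy cost — and its accumulation under composition — is governed solely by $q$, $\sigma$, and $c$, exactly as in~\cite{abadi2016deep}. I would then apply the moments accountant: bound the log moment of the subsampled Gaussian mechanism at each order $\lambda$ by $O(q^2\lambda^2/\sigma^2)$ for $\sigma\ge 1$ and suitably small $q$, sum over the $T$ iterations (moments add under composition), and convert the resulting bound on the $T$‑fold moment back to an $(\varepsilon,\delta)$ guarantee via the standard tail bound $\delta = \exp\bigl(\alpha(\lambda)-\lambda\varepsilon\bigr)$ optimized over $\lambda$. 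Choosing $\sigma \ge c_2\, q c \sqrt{T\log(1/\delta)}/\varepsilon$ and restricting to $\varepsilon < c_1 q^2 T$ is precisely the regime in which this optimization yields the claimed statement, with $c_1,c_2$ the absolute constants produced by the accountant.

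The only genuine content beyond citing~\cite{abadi2016deep} is the sensitivity claim, and that is where I would be most careful: I need that the Gaussian noise is added to $\sum_i \bar{\mathbf g}_t(x_i)$ before the division by $|B_t|$ (which Algorithm~\ref{alg:promptdpsgd} does), so that scaling by $1/|B_t|$ is benign post‑processing and the effective noise multiplier relative to sensitivity $c$ is exactly $\sigma$. I would also note the one subtlety specific to soft prompts — that $\prompt_0$ is initialized \emph{independently of the private data}, so the initialization leaks nothing and the accounting starts cleanly at iteration $1$. Everything else (the restriction $\varepsilon<c_1q^2T$, the existence of the constants, the $\sqrt{\log(1/\delta)}$ dependence) is inherited directly from the moments‑accountant theorem and requires no new estimate. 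I do not expect any real obstacle; the main ``work'' is simply articulating the reduction precisely enough that the black‑box invocation of~\cite{abadi2016deep} is justified.
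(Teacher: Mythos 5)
Your proposal is correct and follows essentially the same route as the paper: the paper's proof is a direct appeal to the moments accountant of Abadi \etal, justified by exactly the observations you make --- per-step sensitivity bounded by the clipping norm $c$, noise $\mathcal{N}(0,\sigma^2c^2\mathbf{I})$ added to the clipped sum, and the fact that updating soft-prompt embeddings rather than model weights changes nothing in the accounting since the update is post-processing of the noised statistic. You simply spell out the reduction (subsampling, sensitivity, composition, tail-bound conversion, data-independent initialization) that the paper leaves implicit in its one-line citation.
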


\begin{proof}
The proof follows the one by Abadi~\etal~\cite{abadi2016deep}, using their moments accountant that models the privacy loss as a random variable dependent on the stochastic noise added.
\end{proof}

\paragraph{\promptB.}
Our \promptB relies entirely on the Confident GNMAX algorithm from Papernot \etal~\cite{papernot2018scalable}.
We preserve the assumption underlying the algorithm and the respective privacy analysis that the sensitivity during the voting mechanism equals one.
This is done in \promptB by assigning \textit{disjoint} data points from the private prompt downstream dataset to all teachers.
As a consequence, the privacy analysis of our \promptB entirely follows Papernot \etal~\cite{papernot2018scalable}.

Both our \promptA and \promptB experience the post-processing properties of DP, \ie once trained, the privacy guarantee $(\varepsilon,\delta)$ sets an upper bound on privacy leakage for the prompt data, independent on the number and type of queries that will be posed to the final prompted LLM.

\section{Additional Results}
\label{appendix}

\subsection{Membership Inference Attacks}
\label{app:mia}

We present the full results of MIA against GPT3 with one-shot prompts on 4 datasets in \ref{fig:mi_attack_append}. 
\begin{figure*}[h]
        \begin{subfigure}[b]{0.235\textwidth}
            \centering
            \includegraphics[width=\textwidth]{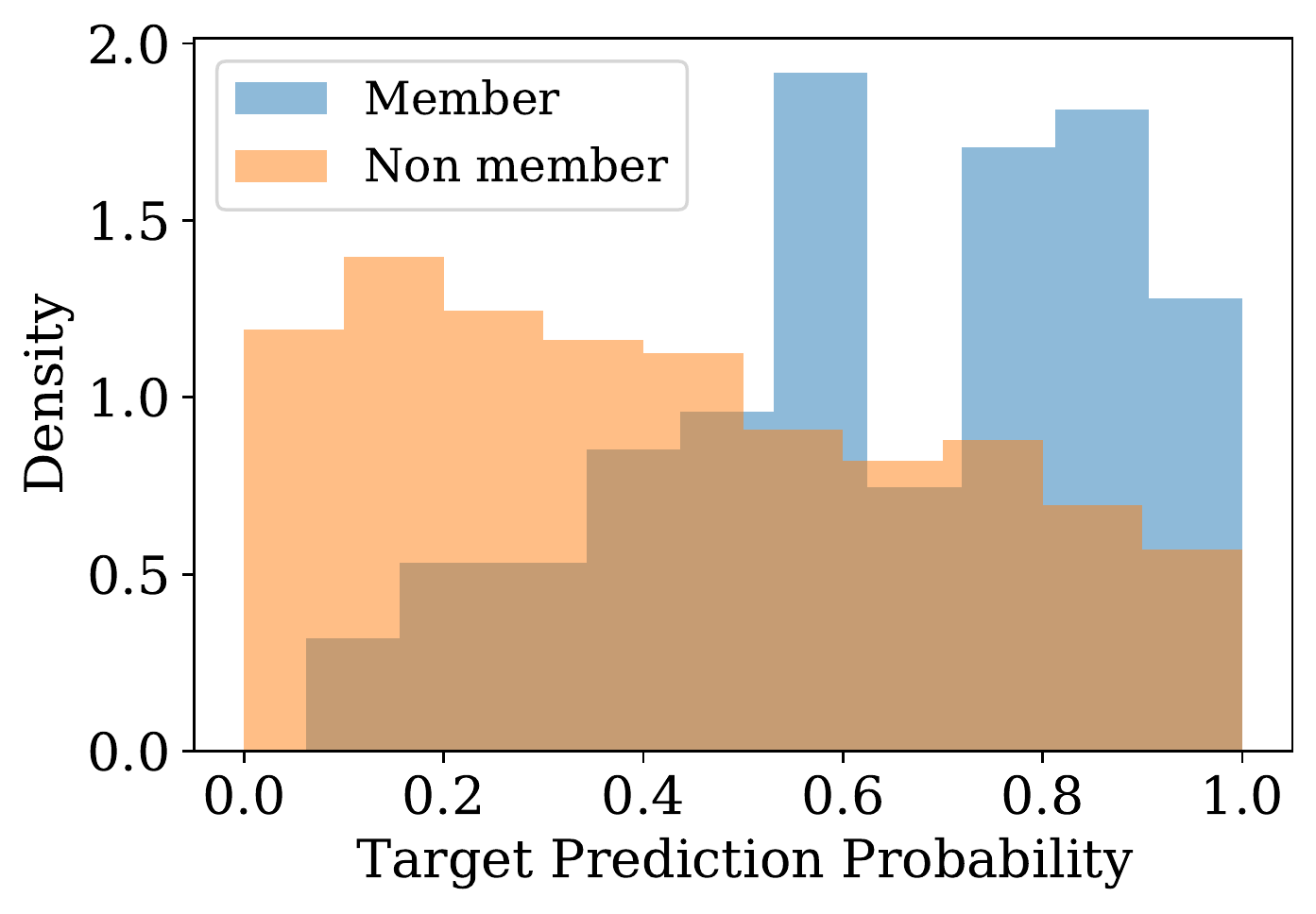}
            \caption[]%
            {{\small agnews}} 
            \label{fig:auc_argnews}
            \label{fig:mean and std of net14}
        \end{subfigure}
        \hfill
        \begin{subfigure}[b]{0.235\textwidth}  
            \centering 
            \includegraphics[width=\textwidth]{neurips/figures/histogram_dbpedia.pdf}
            \caption[cb]%
            {{\small dbpedia}}    
        \end{subfigure}
        \hfill
        \begin{subfigure}[b]{0.235\textwidth}   
            \centering 
            \includegraphics[width=\textwidth]{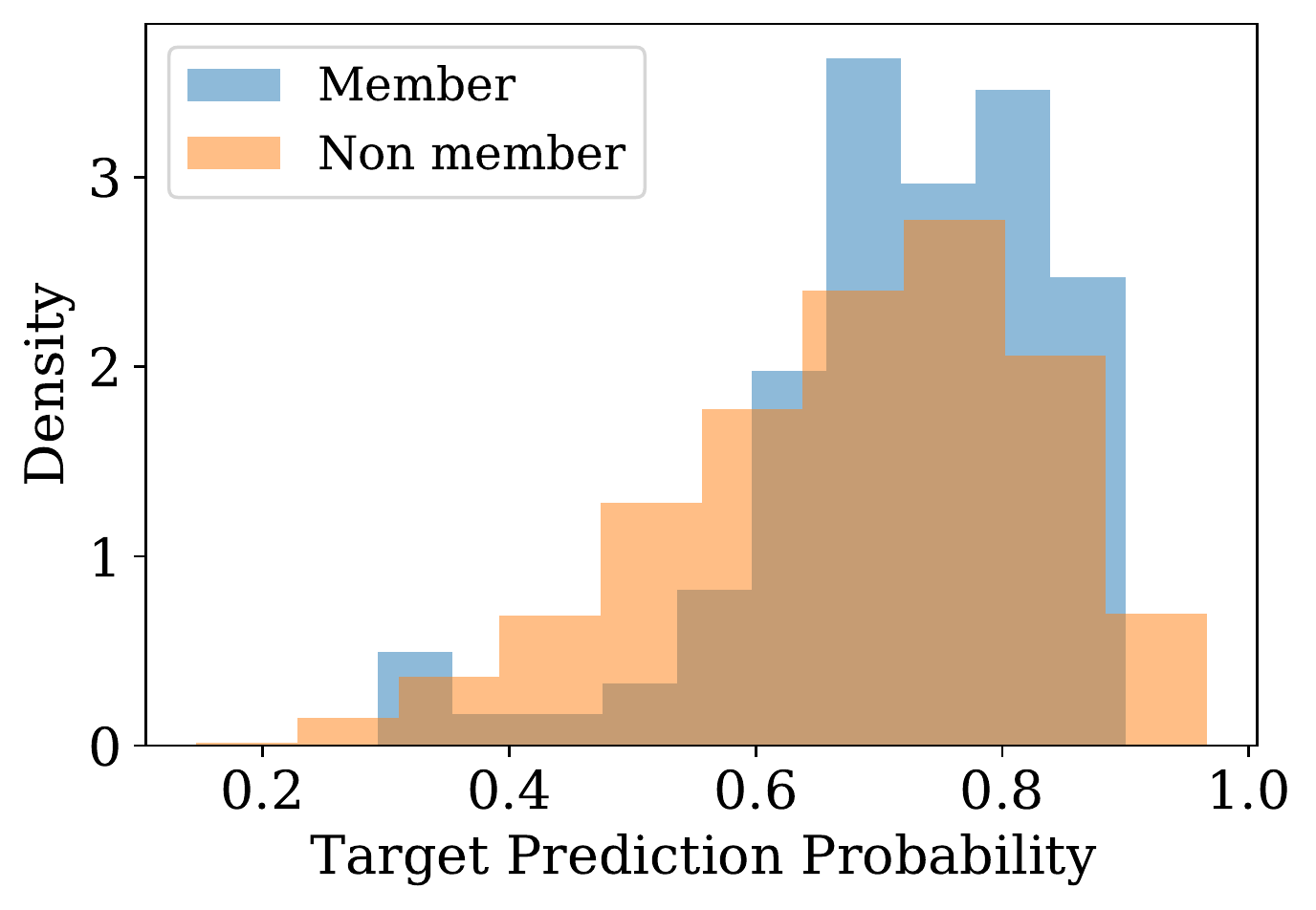}
            \caption[sst2]%
            {{\small sst2}}    
        \end{subfigure}
        \hfill
        \begin{subfigure}[b]{0.235\textwidth}   
            \centering 
            \includegraphics[width=\textwidth]{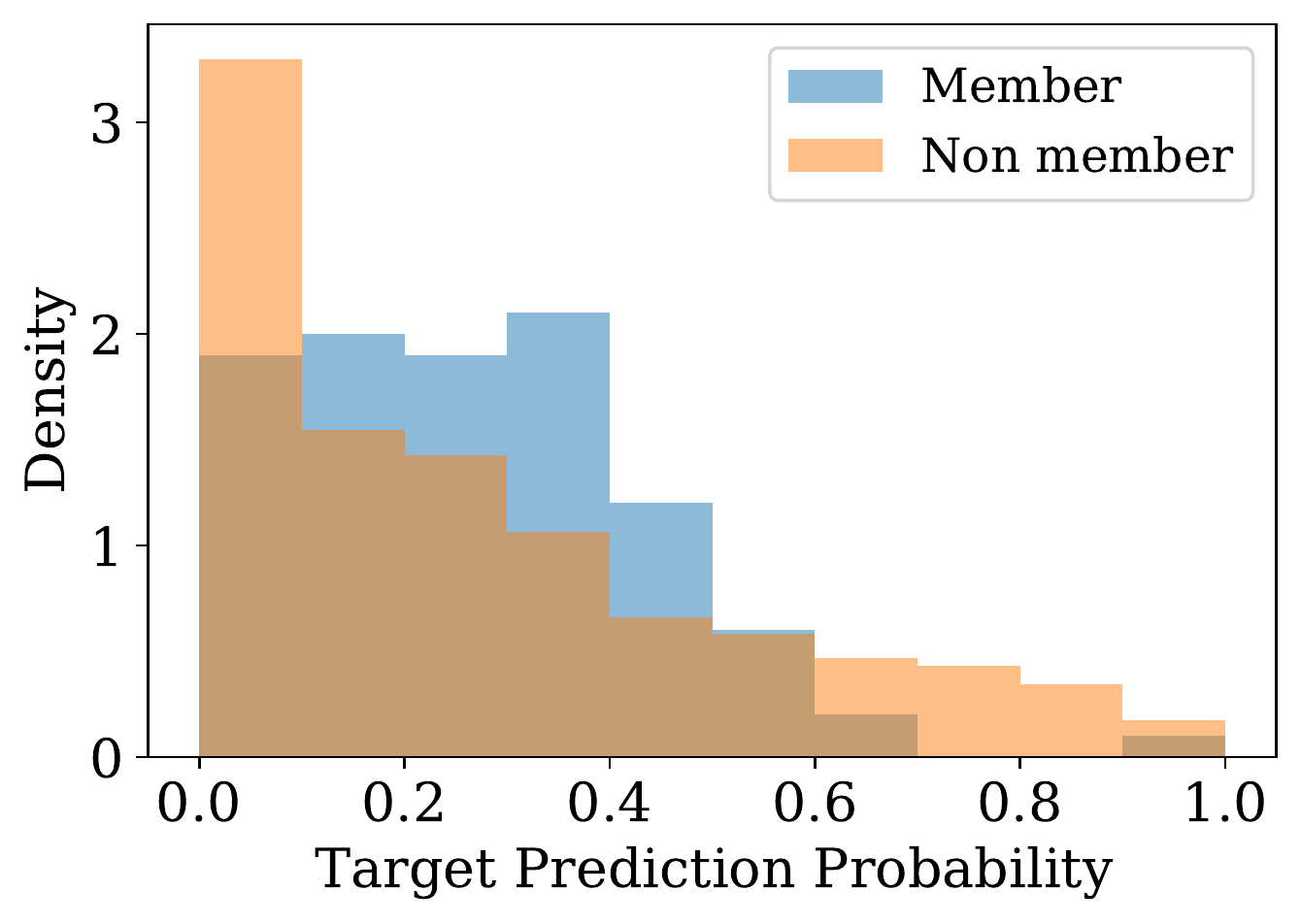}
            \caption[trec]%
            {{\small trec}}    
        \end{subfigure}
        \hfill 
        \begin{subfigure}[b]{0.235\textwidth}
            \centering
            \includegraphics[width=\textwidth]{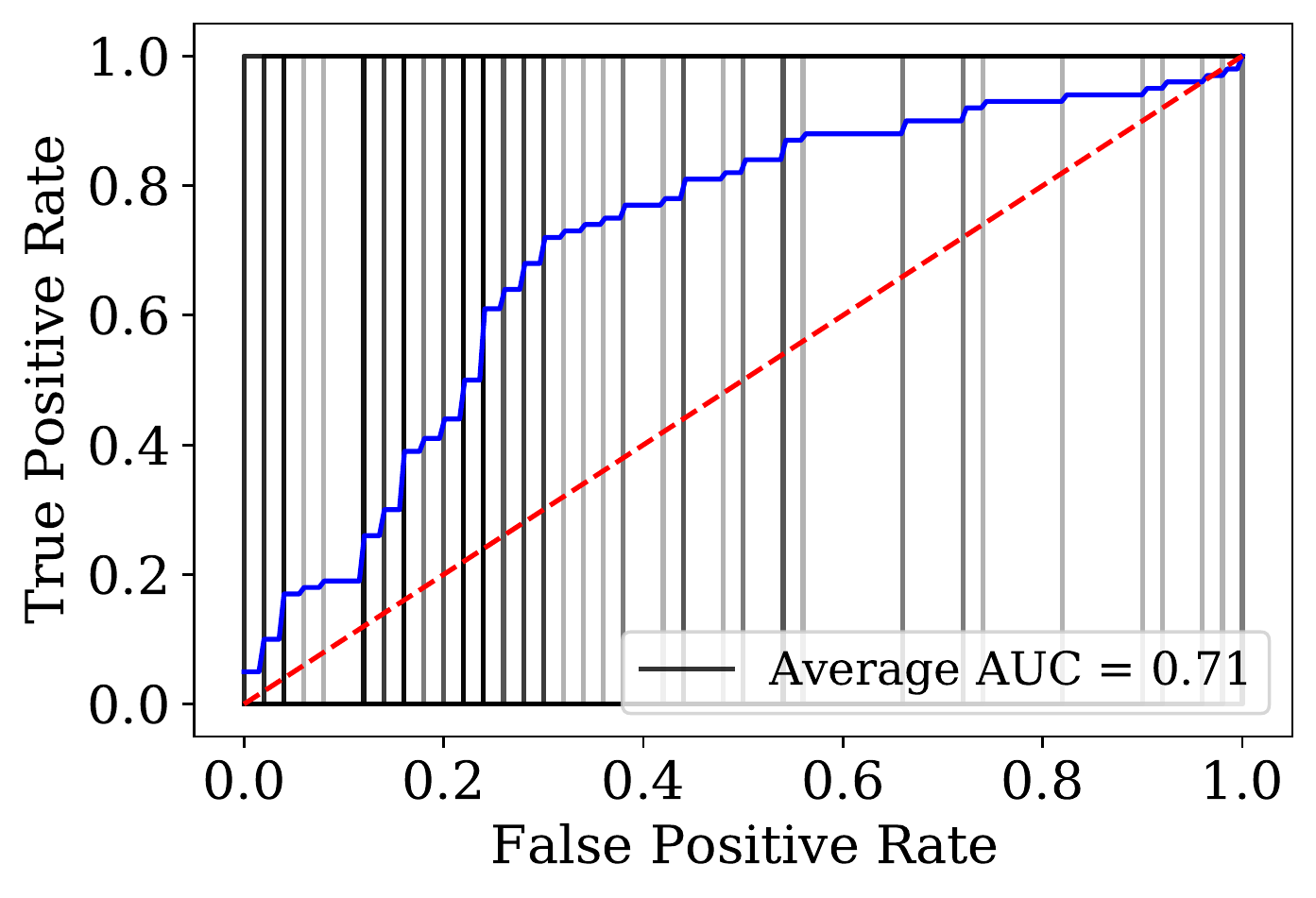}
            \caption[]%
            {{\small agnews}} 
            \label{fig:auc_argnews}
            \label{fig:mean and std of net14}
        \end{subfigure}
        \hfill
        \begin{subfigure}[b]{0.235\textwidth}  
            \centering 
            \includegraphics[width=\textwidth]{neurips/figures/auc_roc_dbpedia.pdf}
            \caption[cb]%
            {{\small dbpedia}}    
        \end{subfigure}
        \hfill
        \begin{subfigure}[b]{0.235\textwidth}   
            \centering 
            \includegraphics[width=\textwidth]{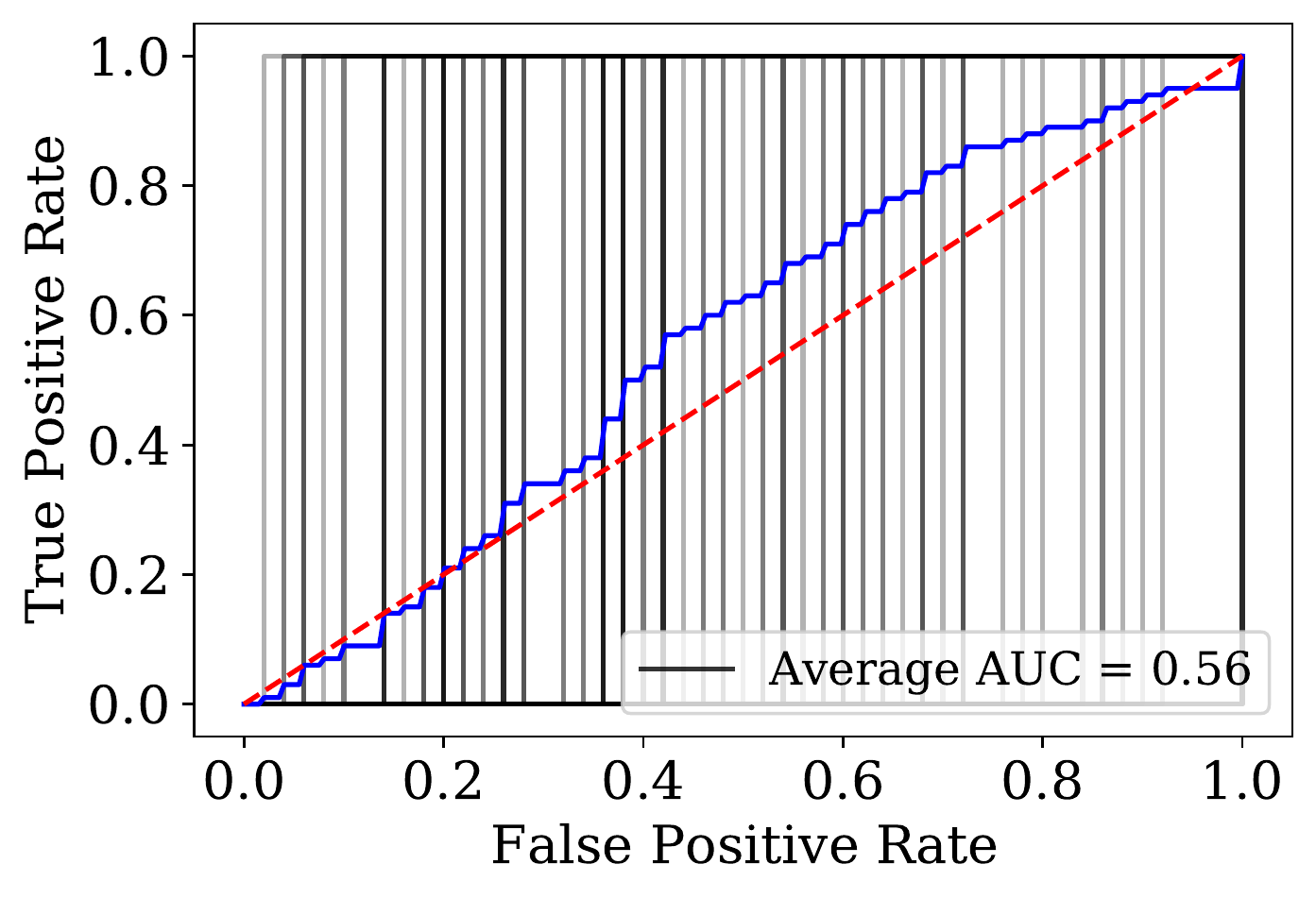}
            \caption[sst2]%
            {{\small sst2}}    
        \end{subfigure}
        \hfill
        \begin{subfigure}[b]{0.235\textwidth}   
            \centering 
            \includegraphics[width=\textwidth]{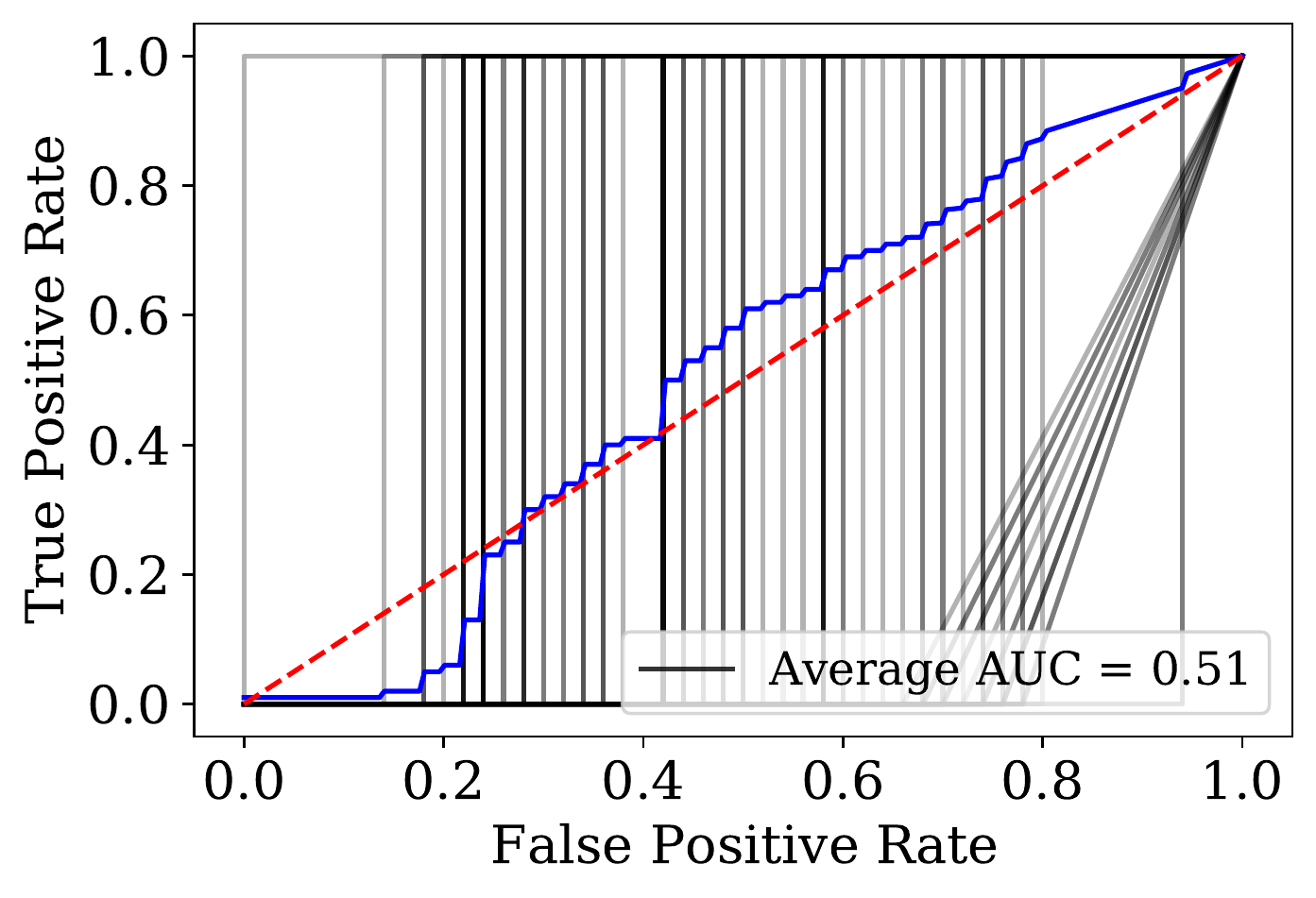}
            \caption[trec]%
            {{\small trec}}    
        \end{subfigure}
        \hfill 
        \caption{\textbf{MIA Risk over Multiple Datasets on GPT3.} We study GPT3-babbage prompted with $100$ different one-shot examples on four datasets.
        \textit{top}: We present the prediction probabilities at the correct class for members (the one-shot example) and non-members ($50$ randomly sampled private points).  The output probability for members is significantly higher than for non-member data points. \textit{bottom}: We present the AUC-ROC curves of our MIA against the $100$ prompts (gray lines) and the blue line as an average over all attacks. Given that each prompt has only one member, the resulting TPRs can only be 0\% or 100\% which leads to the step-shape of the gray curves. The result indicates that our attack is significantly more successful than random guessing (the red dashed line). 
        } 
        \label{fig:mi_attack_append}
    \end{figure*}

In addition, we also perform similar experiments on GPT2-xl with four-shot examples, with results presented in Figure \ref{fig:mi_attack_gpt2}. We replace \dbpedia with cb because the input in \dbpedia is usually longer than the context length of GPT2.

\begin{figure*}[h]

\begin{subfigure}[b]{0.225\textwidth}
            \centering
            \includegraphics[width=\textwidth]{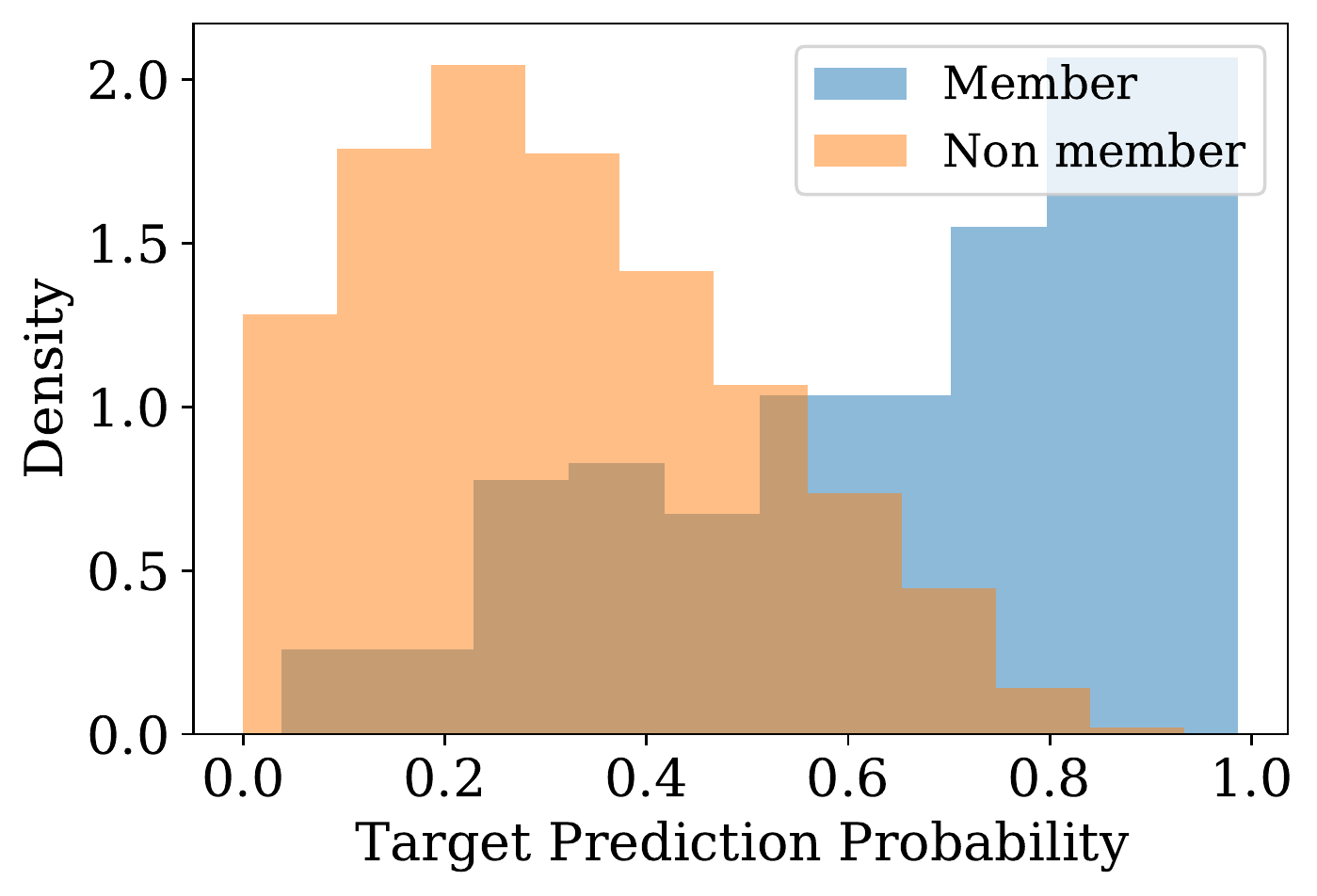}
            \caption[]%
            {{\small agnews}}    
            \label{fig:mean and std of net14}
        \end{subfigure}
        \hfill
        \begin{subfigure}[b]{0.225\textwidth}  
            \centering 
            \includegraphics[width=\textwidth]{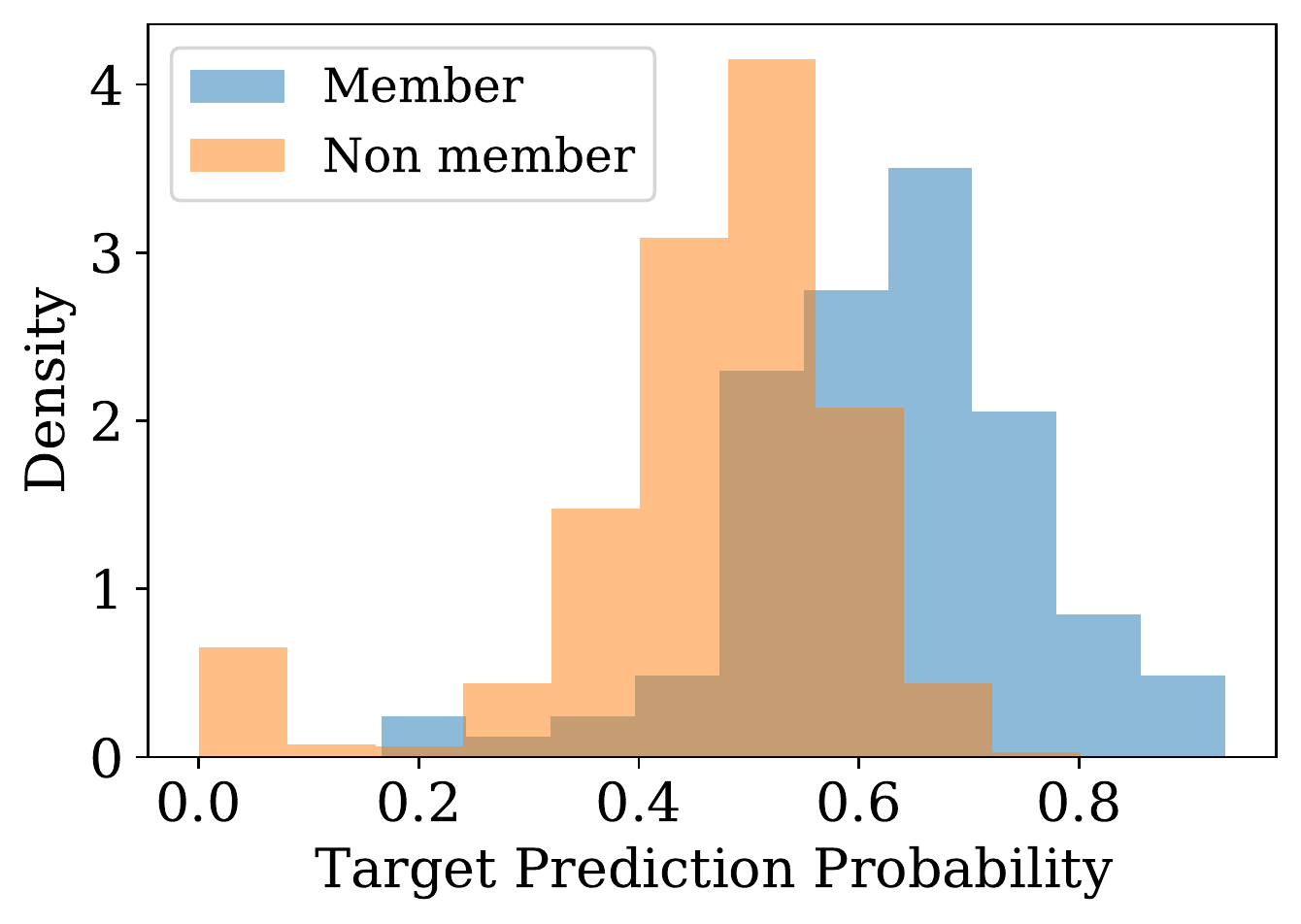}
            \caption[cb]%
            {{\small cb}}    
            \label{fig:mean and std of net24}
        \end{subfigure}
        \hfill
        \begin{subfigure}[b]{0.225\textwidth}   
            \centering 
            \includegraphics[width=\textwidth]{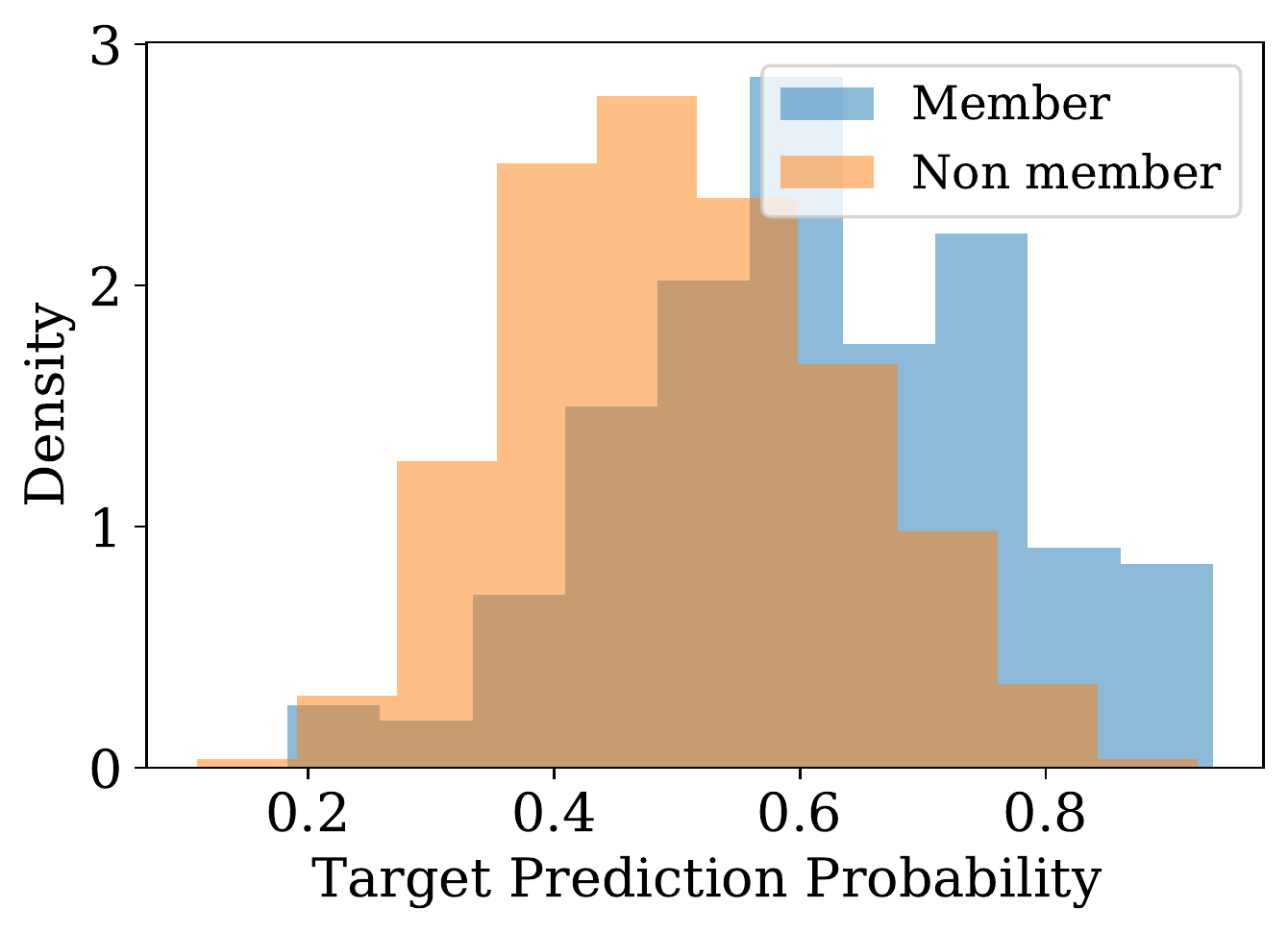}
            \caption[sst2]%
            {{\small sst2}}    
            \label{fig:mean and std of net34}
        \end{subfigure}
        \hfill
        \begin{subfigure}[b]{0.225\textwidth}   
            \centering 
            \includegraphics[width=\textwidth]{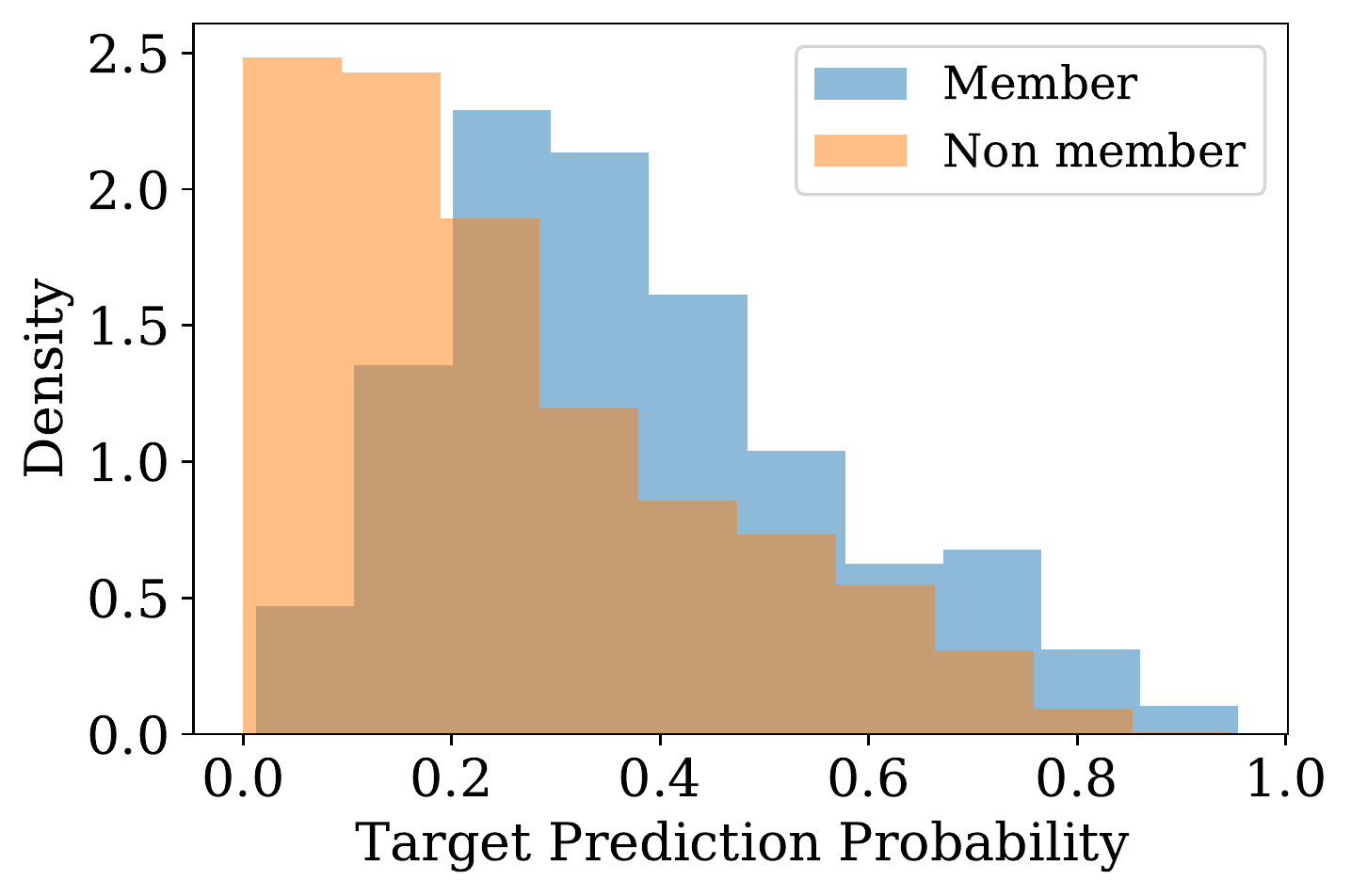}
            \caption[trec]%
            {{\small trec}}    
            \label{fig:mean and std of net44}
        \end{subfigure}
        \hfill 
       \begin{subfigure}[b]{0.225\textwidth}
            \centering
            \includegraphics[width=\textwidth]{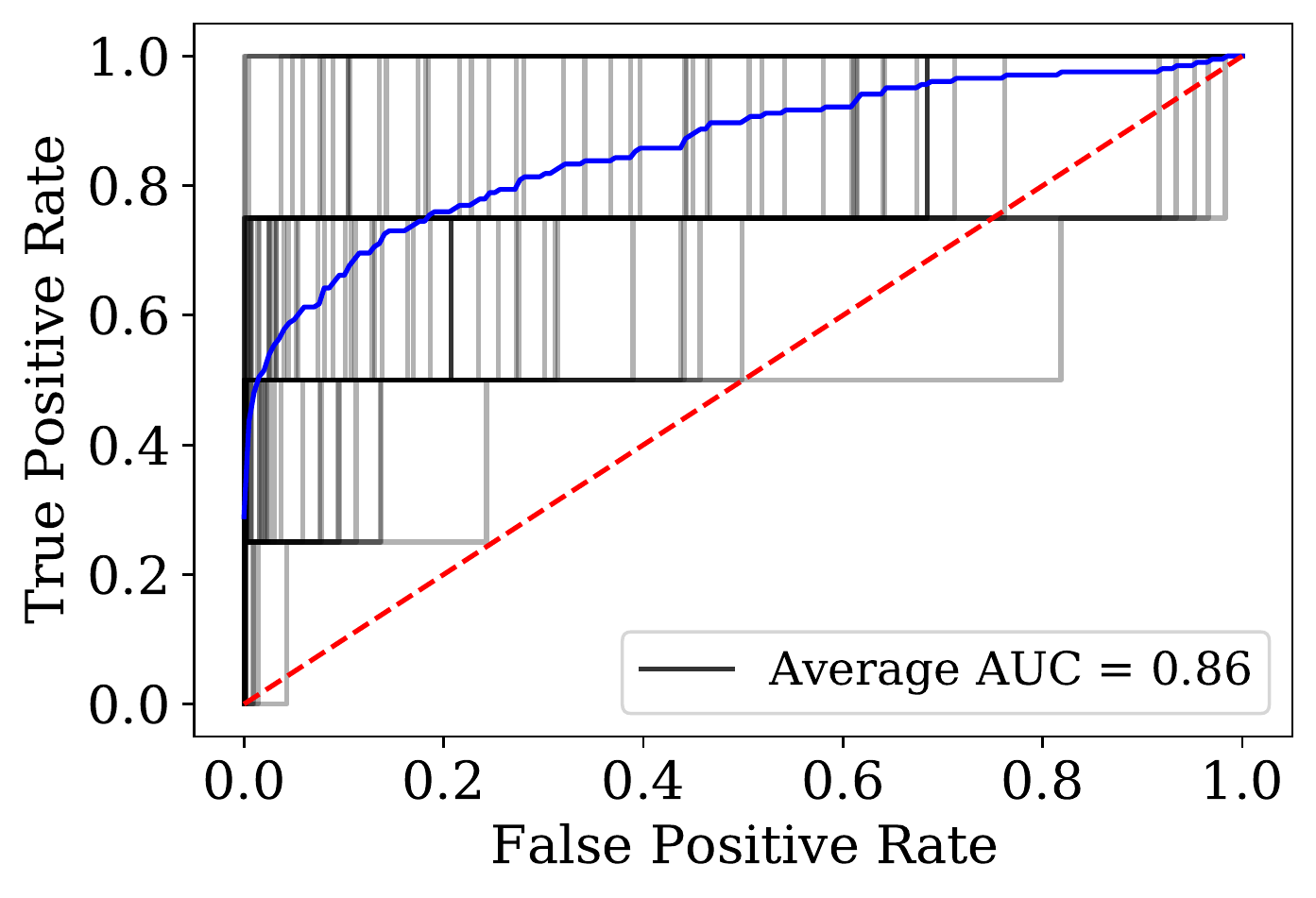}
            \caption[]%
            {{\small agnews}}    
            \label{fig:mean and std of net14}
        \end{subfigure}
        \hfill
        \begin{subfigure}[b]{0.225\textwidth}  
            \centering 
            \includegraphics[width=\textwidth]{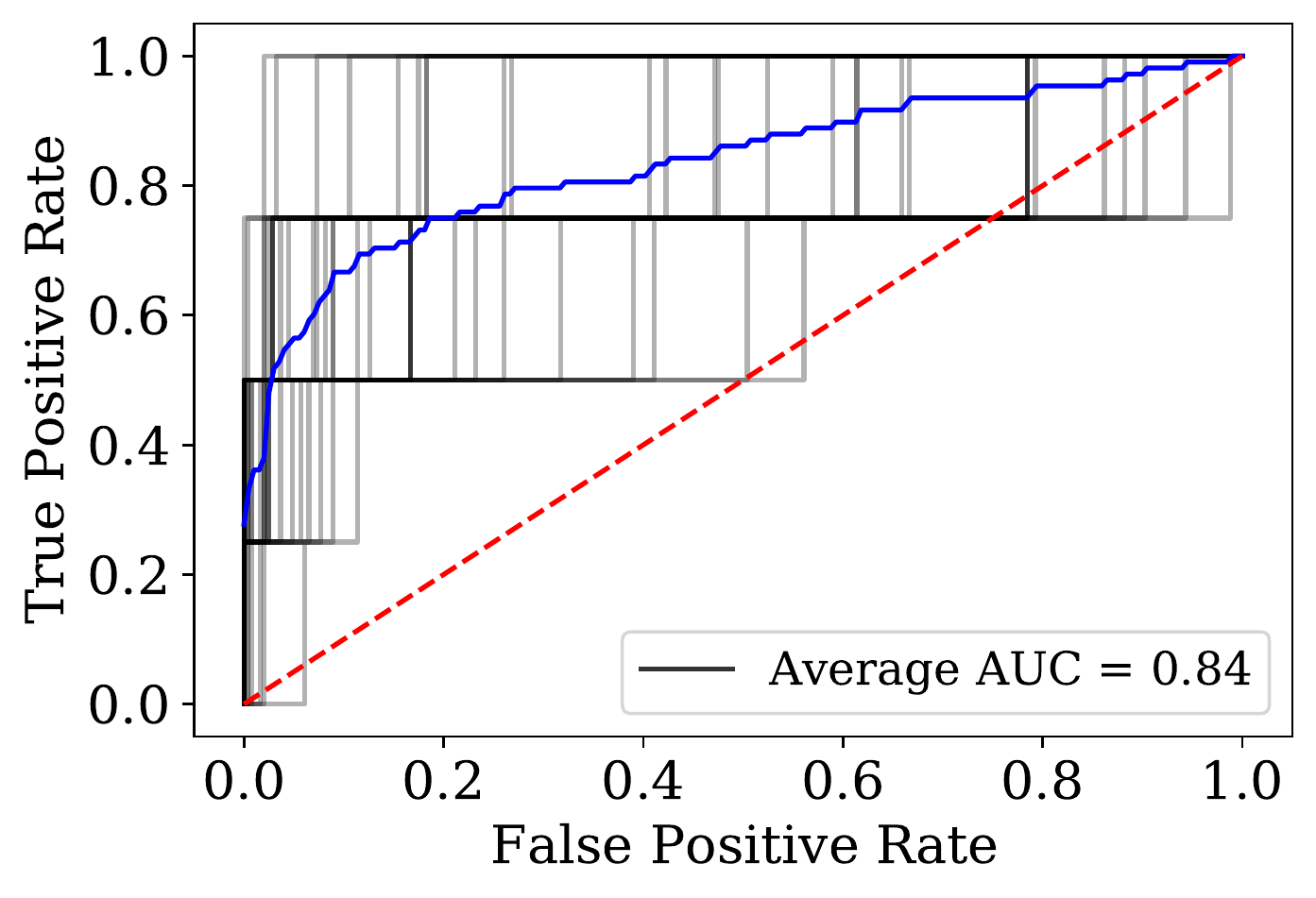}
            \caption[cb]%
            {{\small cb}}    
            \label{fig:mean and std of net24}
        \end{subfigure}
        \hfill
        \begin{subfigure}[b]{0.225\textwidth}   
            \centering 
            \includegraphics[width=\textwidth]{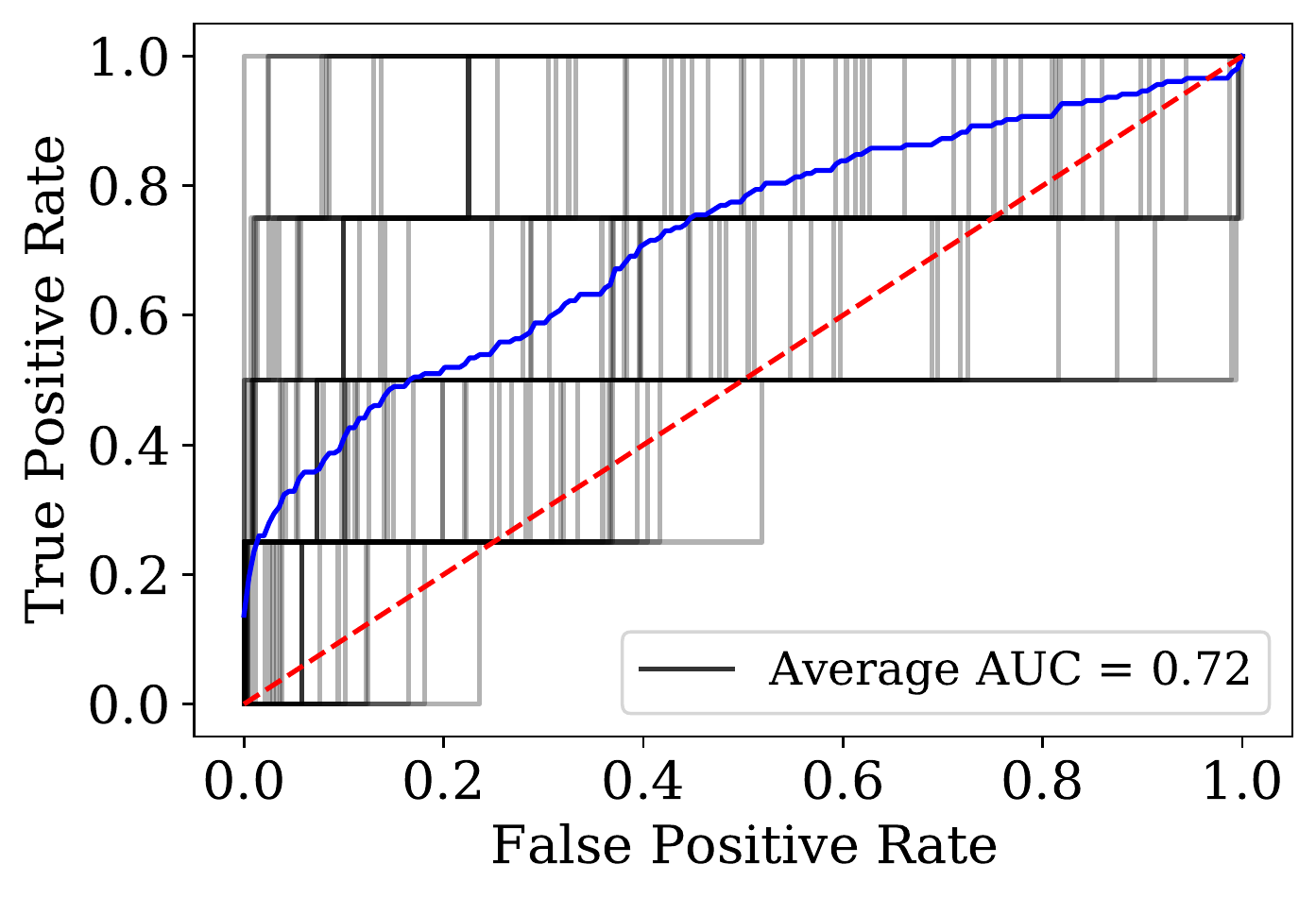}
            \caption[sst2]%
            {{\small sst2}}    
            \label{fig:mean and std of net34}
        \end{subfigure}
        \hfill
        \begin{subfigure}[b]{0.225\textwidth}   
            \centering 
            \includegraphics[width=\textwidth]{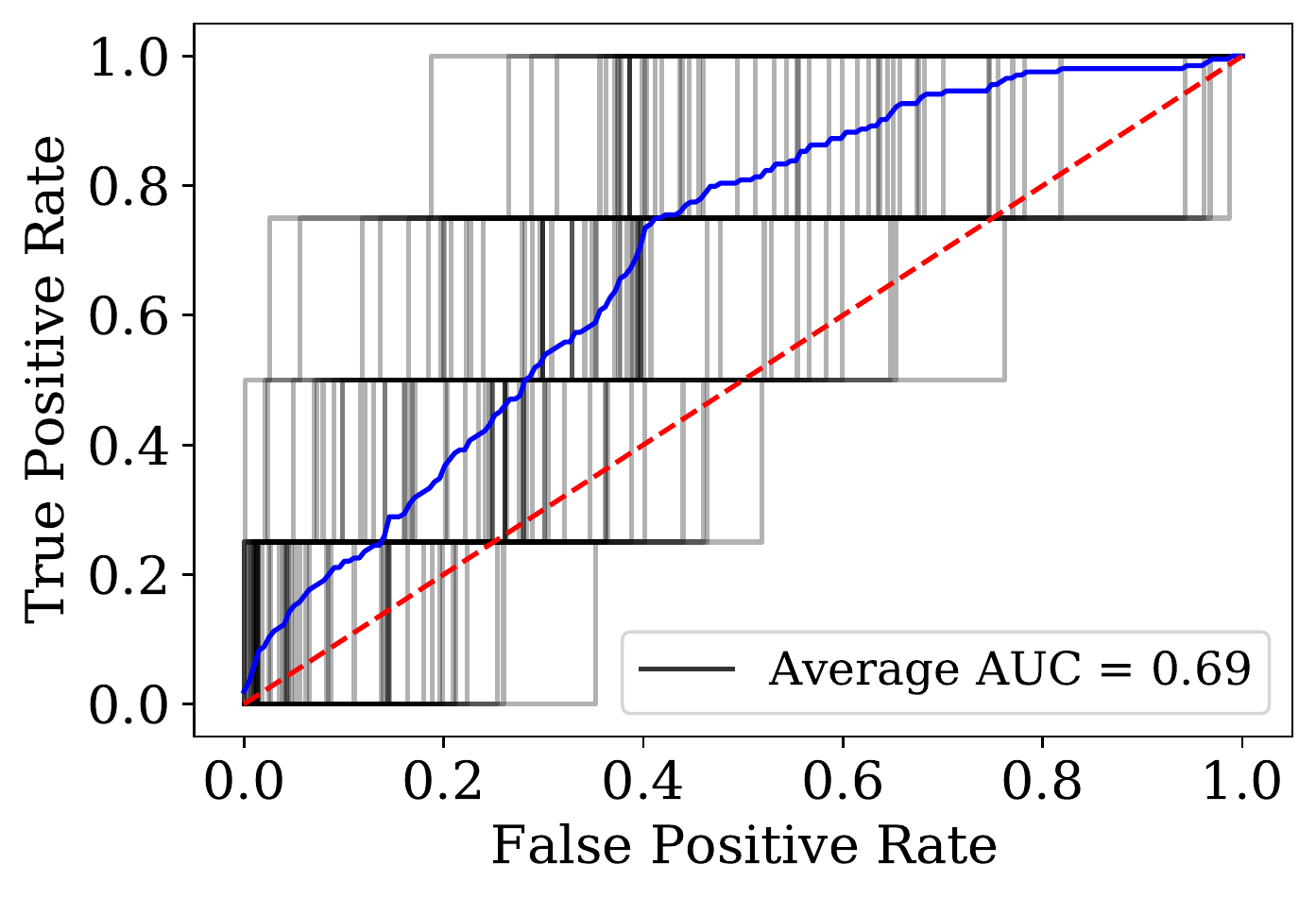}
            \caption[trec]%
            {{\small trec}}    
            \label{fig:mean and std of net44}
        \end{subfigure}
        \hfill 

        \caption{\textbf{MIA Risk over Multiple Datasets on GPT2-xl (4 shot).} We study GPT2-xl prompted with $100$ different four-shot examples on four datasets.
        \textit{top}: We present the prediction probabilities at the correct class for members (the one-shot example) and non-members ($50$ randomly sampled private points).  The output probability for members is significantly higher than for non-member data points. \textit{bottom}: We present the AUC-ROC curves of our MIA against the $100$ prompts (gray lines) and the blue line as an average over all attacks. Given that each prompt has only one member, the resulting TPRs can only be 0\%, 25\%, 50\%, 75\% or 100\% which leads to the step-shape of the gray curves. The result indicates that our attack is significantly more successful than random guessing (the red dashed line). }
        \label{fig:mi_attack_gpt2}
    \end{figure*}

\subsection{\promptB on Claude}
\label{app:ensembling}
We present the experiment results of \promptB on Claude \citep{antropicclaude}. Different from GPT3 that outputs logits over the whole vocabulary, Claude only gives us access to the next most likely token. 

\paragraph{Experimental Setup.} \textbf{Teachers:} We rely on Claude-v1 as the base LLM. We use 2-shot prompts for \sst and \agnews, 4-shot for \trec and 1-shot for \dbpedia. We set the maximum generated tokens to $1$ and temperatures to $0$. We also create an "other" category in case the moel's output does not fall under any specified categories. For each setting, we deploy $400$ teacher prompts. 
\textbf{Private knowledge transfer:} We use the implementation of PATE's Confident GNMAX algorithm and the privacy accounting from~\cite{choquette2021capc} and report our algorithm's hyperparameters in \Cref{app:setup}. %
\textbf{Student:} We limit the size of the public dataset to $200$ input sequences from the respective datasets. The number of shots for students corresponds with the teachers. 

\begin{table}[t]
    \centering
    \begin{tabular}{lccccccc}
    \toprule
    & Lower & Ens.&Upper & & \multicolumn{3}{c}{Our \promptB} \\ 
         & Bound& Acc.& Bound & & \\
    \cmidrule{2-4} \cmidrule{6-8}
     Private & $\varepsilon=0$ &  $\varepsilon = \infty$&$\varepsilon = \infty$ & & Public& $\varepsilon$ & Test acc \\
     \midrule 
    \sst & $92.7$ & $96.0$ & $98.0$ & & \sst & $0.048$ & $95.7 \pm 1.4$\\ 
     \agnews & $72.4$ & $79.1$ & $82.7$ & & \agnews & $0.056$ & $74.6 \pm 1.5$ \\ 
     \trec & $69.0$ & $79.9$ & $82.2$ & & \trec & $0.068$ & $79.3 \pm 1.2$ \\ 
  \dbpedia &  $88.0$ & $92.4$ & $93.5$ & & \dbpedia & $0.042$ & $90.9\pm 0.6$\\ 
    \bottomrule
    \end{tabular}
    \vspace{0.5em}
    \caption{\textbf{Performance of \promptB on Claude.}  We compare \promptB with three baselines: zero-shot (Lower Bound), the ensemble's accuracy (Ens. Acc), and the non-private baseline (Upper Bound) on four classification benchmarks. We find that \promptB achieves strong privacy protection ($\varepsilon < 0.1$  at $\delta=10^{-6}$) and utility close to the non-private and significantly higher than the zero-shot.
    }
    \label{tab:PATE_claude}
\end{table}

\subsection{More results for \promptA}
\label{app:promptA}

We present the additional results for \promptA with $\varepsilon=3$ on the classification tasks in \Cref{tab:soft_prompt_classification_epsilon3}. 

\addtolength{\tabcolsep}{-0pt} 
\begin{table}[h]
    \centering
    \begin{tabular}{cccccccccc}
    \toprule
     \multirow{3}{*}{Dataset} & \textbf{M} & \multicolumn{2}{c}{ Soft-Prompt (Our)} & \multicolumn{2}{c}{Prefix (Our)} & \multicolumn{2}{c}{Full-Tuning~\cite{li2022large}} & \multicolumn{2}{c}{LoRA-Tuning~\cite{yu2022differentially}} \\
     \cmidrule{3-10} %
      & \textbf{P} & \multicolumn{2}{c}{ <10K}  & \multicolumn{2}{c}{<100K} & \multicolumn{2}{c}{125M} & \multicolumn{2}{c}{1.2M}  \\
      \cmidrule{3-10} 
      & \textbf{G} & $\varepsilon=3$ & $\varepsilon=\infty$  & $\varepsilon=3$ & $\varepsilon=\infty$ & $\varepsilon=3$ & $\varepsilon=\infty$ & $\varepsilon=3$ & $\varepsilon=\infty$\\ 
    \midrule
     \multicolumn{1}{c}{SST2}  && 90.48 & 95.64 & 90.37 & 96.33 & 91.86 & 96.40 & 92.60 & 96.60 \\
     \multicolumn{1}{c}{QNLI}  && 83.62 & 89.48 & 86.05 & 94.84 & 87.42 & 94.70 & 86.97 & 94.70 \\
     \multicolumn{1}{c}{QQP}   && 80.29 & 86.56 & 80.89 & 91.42 & 85.56 & 92.20 & 85.12 & 92.20\\
     \multicolumn{1}{c}{MNLI}  && 73.97 & 82.49 & 80.10 & 90.34 & 82.99 & 90.20 & 82.08 & 90.20 \\
    
    \bottomrule
    \end{tabular}
    \vspace{0.5em} 
    \caption{\textbf{Private classification with soft prompts and prefix for $\varepsilon=\{3,\infty\}$ and the RoBERTa$_{BASE}$ model.} 
    We use the same setup and notation as in \Cref{tab:soft_prompt_classification}.
    }
    \label{tab:soft_prompt_classification_epsilon3}
\end{table}
\addtolength{\tabcolsep}{0pt}

\addtolength{\tabcolsep}{-0pt} 
\begin{table}[h]
    \centering
    \begin{tabular}{ccccc}
    \toprule
     \multirow{2}{*}{Dataset} & \textbf{M} & Soft-Prompt (Our) & Prefix (Our) & Full-Tuning~\citep{li2022large} \\
     \cmidrule{2-5}
     & \textbf{P} & <10K & <100K & 125M \\
    \midrule
     \multicolumn{1}{c}{SST2}  && 90.71 & 93.58 & 90.94 \\
     \multicolumn{1}{c}{QNLI}  && 87.62 & 89.45 & 89.42 \\
     \multicolumn{1}{c}{QQP}   && 82.29 & 83.50 & 87.49 \\
     \multicolumn{1}{c}{MNLI}  && 76.05 & 86.40 & 86.28 \\
    
    \bottomrule
    \end{tabular}
    \vspace{0.5em}
    \caption{\textbf{Private classification with soft prompts and prefix for $\varepsilon=8$ and the RoBERTa$_{LARGE}$ model.} 
    We use the same setup and notation as in \Cref{tab:soft_prompt_classification}. 
    }
    \label{tab:soft_prompt_classification_epsilon3}
\end{table}
\addtolength{\tabcolsep}{0pt}

\section{Additional Setup}
\label{app:setup}

\subsection{\promptA}

We train \promptA on NVIDIA A100 GPUs. We execute (hyper-)parameter search that takes into account learning rate (LR), max grad norm (GRAD), number of epochs (Epochs), the token length of prefix and prompt. In general, we find that the prompt and prefix token length of 10 is close to the optimal value in most cases. For the private (hyper-)parameters, in most cases we tune for $\varepsilon=8$ and use similar (or even the same) parameters for other $\varepsilon$ values. We set the max grad norm to 0.1 in most cases and then adjust the number of epochs (the more the better, for example, 100), and the learning rate~\citep{yu2022differentially}\footnote{We would like to thank the authors of~\citep{yu2022differentially} for their help, especially for the very useful and practical pieces of advice on how to tune the parameters for differential privacy from Huseyin A. Inan. }. The batch size is set by default to 1024. 

We show the specific parameters chosen for \promptA in \Cref{tab:soft_prompt_classification_setup}.

\addtolength{\tabcolsep}{-2pt} 
\begin{table}[h]
    \centering
    \begin{tabular}{ccccccccccc}
    \toprule
     Dataset & Method & RoBERTa & BS & LR & $\varepsilon$ & GRAD & Epochs & P-Length & Accuracy (\%) \\
     \hline
     SST2 & Prompt & Base & 1024 & 0.005 & $\infty$ & N/A & 60 & 100 & 93.23 \\
     SST2 & Prompt & Base & 900 & 0.05 & 8 & 0.01 & 21 & 9 & 92.32 \\
     SST2 & Prompt & Base & 1024 & 0.005 & 3 & 0.05 & 100 & 10 & 86.35 \\
     SST2 & Prompt & Large & 2048 & 0.005 & 8 & 4 & 100 & 10 & 90.71 \\
     SST2 & Prefix & Base & 32 & 0.01 & $\infty$ & N/A & 60 & 20 & 94.61 \\
     SST2 & Prefix & Base & 1000 & 0.05 & 8 & 4 & 22 & 1 & 91.97 \\
     SST2 & Prefix & Base & 1024 & 0.01 & 3 & 0.2 & 100 & 50 & 90.37 \\
     SST2 & Prefix & Large & 2048 & 0.05 & 8 & 4 & 22 & 1 & 93.58 \\
     \cdashlinelr{1-10}
     QNLI & Prompt & Base & 1024 & 0.005 & $\infty$ & N/A & 60 & 128 & 89.48 \\
     QNLI & Prompt & Base & 1024 & 0.005 & 8 & 0.05 & 100 & 10 & 84.11 \\
     QNLI & Prompt & Base & 1024 & 0.005 & 3 & 0.1 & 100 & 50 & 83.62 \\
     QNLI & Prompt & Large & 2048 & 0.01 & 8 & 0.05 & 100 & 10 & 87.62 \\
     QNLI & Prefix & Base & 1024 & 0.005 & $\infty$ & N/A & 60 & 20 & 94.84 \\
     QNLI & Prefix & Base & 1000 & 0.03 & 8 & 0.07 & 22 & 10 & 88.77 \\
     QNLI & Prefix & Base & 1024 & 0.01 & 3 & 0.2 & 100 & 50 & 85.78 \\
     QNLI & Prefix & Large & 2048 & 0.03 & 8 & 0.07 & 22 & 10 & 89.45 \\
     \cdashlinelr{1-10}
     QQP & Prompt & Base & 1024 & 0.005 & $\infty$ &  N/A & 60 & 50 & 86.64 \\
     QQP & Prompt & Base & 1024 & 0.05 & 8 & 0.1 & 10 & 7 & 82.58 \\
     QQP & Prompt & Base & 1024 & 0.001 & 3 & 0.01 & 100 & 15 & 80.29 \\
     QQP & Prompt & Large & 2048 & 0.005 & 8 & 0.05 & 100 & 10 & 82.29 \\
     QQP & Prefix & Base & 1024 & 0.005 & $\infty$ &  N/A & 60 & 20 & 91.42 \\
     QQP & Prefix & Base & 1024 & 0.05 & 8 & 0.1 & 10 & 7 & 82.59 \\
     QQP & Prefix & Base & 1024 & 0.05 & 3 & 1 & 15 & 2 & 80.89 \\
     QQP & Prefix & Large & 2048 & 0.05 & 8 & 0.1 & 10 & 7 & 83.50 \\
     \cdashlinelr{1-10}
     MNLI & Prompt & Base & 32 & 0.001 & $\infty$ & N/A & 60 & 20 & 82.49 \\
     MNLI & Prompt & Base & 1024 & 0.005 & 8 & 0.05 & 60 & 10 & 75.01 \\
     MNLI & Prompt & Base & 1024 & 0.005 & 3 & 0.05 & 100 & 10 & 73.97 \\
     MNLI & Prompt & Large & 2048 & 0.005 & 8 & 0.2 & 60 & 10 & 76.05 \\
     MNLI & Prefix & Base & 32 & 0.001 & $\infty$ & N/A & 60 & 20 & 82.49 \\
     MNLI & Prefix & Base & 1024 & 0.005 & 8 & 0.05 & 60 & 50 & 80.42 \\
     MNLI & Prefix & Base & 1024 & 0.005 & 3 & 0.2 & 100 & 50 & 80.10 \\
     MNLI & Prefix & Large & 2048 & 0.01 & 8 & 0.1 & 100 & 10 & 86.40 \\
    \bottomrule
    \end{tabular}
    \vspace{0.5em}
    \caption{\textbf{Detailed parameters for soft prompts and prefix.} Type is the type of training, BS represents the batch size, LR denotes the learning rate, $\varepsilon$ is the DP guarantee, P-Length is the token length of soft-prompt or prefix.
    }
    \label{tab:soft_prompt_classification_setup}
\end{table}
\addtolength{\tabcolsep}{2pt} 

\subsection{\promptB}

\subsubsection{Hyperparameters for Confident-GNMax}

We present our hyperparameters for Confident-GNMax in Table \ref{tab:hyper_gnnmax}. 

\begin{table}[h]
    \centering
    \begin{tabular}{cccccccccc}
    \toprule
     LLM & Dataset & $T$ & $\sigma_1$ & $\sigma_2$ \\
     \midrule
    GPT3 & \sst & $180$ & $1$ & $20$ \\ 
    GPT3 & \agnews & $180$ & $5$ & $20$ \\ 
    GPT3 & \trec & $180$ & $1$ & $20$ \\ 
    GPT3 & \dbpedia & $170$ & $1$ & $20$ \\ 
     \cdashlinelr{1-5}
     Claude & \sst & $390$ & $1$ & $50$ \\ 
     Claude & \agnews & $360$ & $1$ & $50$ \\ 
     Claude & \trec & $320$ & $1$ & $50$ \\ 
     Claude & \dbpedia & $320$ & $5$ & $50$ \\ 
    \bottomrule
    \end{tabular}
    \vspace{0.5em}
    \caption{\textbf{Detailed parameters for Confident-GNMax.} 
    }
    \label{tab:hyper_gnnmax}
\end{table}

\subsubsection{Dataset Preprocessing}

\sst, \trec, \agnews, \dbpedia and cb are taken from the repo of \citep{zhao2021calibrate}. All other public datasets are downloaded from huggingface. To reduce the cost of quering APIs, we randomly sample $300$ points from the test set to report the test accuracy. For \imdb, we random select one sentence from each entry and also remove the <br/> tag. For \qqp, we only take the column of "question 1" in the public set.  

\end{document}